\def\BibTeX{{\rm B\kern-.05em{\sc i\kern-.025em b}\kern-.08em
    T\kern-.1667em\lower.7ex\hbox{E}\kern-.125emX}}
\pgfplotsset{compat=1.17}
\newtheorem{lemma}{Lemma}
\newtheorem{theorem}{Theorem}
\newtheorem{definition}{Definition}
\newtheorem*{definition*}{Definition}
\newtheorem{example}{Example}
\newtheorem{remark}{Remark}
\DeclareMathOperator*{\conv}{conv}
\DeclareMathOperator*{\cone}{cone} 
\DeclareMathOperator*{\rank}{rank}
\DeclareMathOperator*{\col}{col}
\Crefname{figure}{Fig.}{Figs.}
\crefname{algorithm}{Alg.}{Algs.}
\Crefname{algorithm}{Algorithm}{Algorithms}
\newcommand{\overbar}[1]{\mkern 1.5mu\overline{\mkern-1.5mu#1\mkern-1.5mu}\mkern 1.5mu}
\definecolor{myblue}{rgb}{0.0, 0.5, 1}
\newcommand{\ovt}[1]{{\color{black}#1}}
\newcommand{\revise}[1]{{{\color{black} #1}}} 
\definecolor{brightpink}{rgb}{1.0, 0.0, 0.5}
\newcommand{\ngi}[1]{{{\color{black} #1}}}
\title{Bounded Simplex-Structured Matrix Factorization: \\ \revise{Algorithms, Identifiability and Applications}} 
\date{}
\author{Olivier {Vu Thanh}, Nicolas Gillis, Fabian Lecron\thanks{University of Mons,  
Rue de Houdain 9, 7000 Mons, Belgium. 
Emails: \{olivier.vuthanh, nicolas.gillis, fabian.lecron\}@umons.ac.be.
The authors acknowledge the support by 
the Fonds de la Recherche Scientifique - FNRS (F.R.S.-FNRS) under the Research Project T.0097.22, 
 by the  
F.R.S.-FNRS and the Fonds Wetenschappelijk Onderzoek - Vlanderen (FWO) under EOS Project no O005318F-RG47, and 
by the Francqui Foundation.}   
	}
\begin{document}

\markboth{Journal of \LaTeX\ Class Files,~Vol.~X, No.~X, X~XXXX}%
{Shell \MakeLowercase{\textit{et al.}}: A Sample Article Using IEEEtran.cls for IEEE Journals}

\IEEEpubid{0000--0000/00\$00.00~\copyright~2022 IEEE}

\maketitle

\begin{abstract}
In this paper, we propose a new low-rank matrix factorization model dubbed bounded simplex-structured matrix factorization (BSSMF). Given an input matrix $X$ and a factorization rank $r$, BSSMF looks 
for a matrix $W$ with $r$ columns and a matrix $H$ with $r$ rows such that $X \approx WH$ where the entries in each column of $W$ are bounded, that is, they belong to given intervals, and the columns of $H$ belong to the probability simplex, that is, $H$ is column stochastic. 
BSSMF generalizes nonnegative matrix factorization (NMF), and simplex-structured matrix factorization (SSMF). 
BSSMF is particularly well suited when the entries of the input matrix $X$ belong to a given interval; for example when the rows of $X$ represent images, or $X$ is a rating matrix such as in the Netflix and MovieLens \revise{datasets} where the entries of $X$ belong to the interval $[1,5]$. 
The simplex-structured matrix $H$ not only leads to an easily understandable decomposition providing a soft clustering of the columns of $X$, 
but implies that the entries of each column of $WH$ belong to the same intervals as the columns of $W$. 
In this paper, we first propose a fast algorithm for BSSMF, even in the presence of missing data in $X$. 
Then we provide identifiability conditions for BSSMF, that is, we provide conditions under which BSSMF admits a unique decomposition, up to trivial ambiguities.  
Finally, we illustrate the effectiveness of BSSMF on two applications: 
extraction of features in a set of images, and 
the matrix completion problem for recommender systems.  
\end{abstract}

\begin{IEEEkeywords}
simplex-structured matrix factorization, 
nonnegative matrix factorization, 
identifiability, 
algorithms. 
\end{IEEEkeywords}

\section{Introduction}

\IEEEPARstart{L}{ow-rank} matrix factorizations have recently emerged as very efficient models for unsupervised learning; see, e.g.,~\cite{Vaswani2018PCA, udell2019big} and the references therein.  
The most notable example is principal component analysis (PCA), which can be solved efficiently via the singular value decomposition. In the last 20 years, many new more sophisticated models have been proposed, 
such as 
sparse PCA that requires one of the factors to be sparse to improve interpretability~\cite{dAspremont2007spca}, 
robust PCA to handle gross corruption and outliers~\cite{chandrasekaran2011rank, candes2011robust}, and low-rank matrix completion, also known as PCA with missing data, to handle missing entries in the input matrix~\cite{koren2009matrix}. 

Among such methods, nonnegative matrix factorization (NMF), popularized by Lee and Seung in 1999~\cite{lee1999learning}, required the factors of the decomposition to be component-wise nonnegative. 
More precisely, 
given an input matrix $X \in\mathbb{R}^{m \times n}$ and a factorization rank $r$, NMF looks 
for a nonnegative matrix $W$ with $r$ columns and a nonnegative matrix $H$ with $r$ rows such that $X \approx WH$. NMF has been shown to be useful in many applications, including topic modeling, image analysis, hyperspectral unmixing, and audio source separation; see~\cite{cichocki2009nonnegative, xiao2019uniq, gillis2020} for more examples. 
The main advantage of NMF compared to previously introduced low-rank models is that the nonnegativity constraints on the factors $W$ and $H$ lead to an easily interpretable part-based decomposition. 

More recently, simplex-structured matrix factorization (SSMF) was introduced as a generalization of NMF~\cite{wu2017stochastic}; see also~\cite{abdolali2021simplex} and the references therein. 
SSMF does not impose any constraint on $W$, while it requires $H$ to be column stochastic, that is, $H(:,j) \in \Delta^r$ for all $j$, where  ${\Delta^r = \{ x \revise{ \in\mathbb{R}^r} \ | \ x \geq 0, e^\top x = 1 \}}$ is the probability simplex  and $e$ is the vector of all ones of appropriate dimension.  SSMF is closely related to various machine learning problems, such as latent Dirichlet allocation, clustering, 
and the mixed membership stochastic block model; see~\cite{bakshilearning} and the references therein. 
Let us recall why SSMF is a generalizarion of NMF \revise{by considering the exact NMF model, $X=WH$. Let us normalize the input matrix such that the entries in each column sum to one (w.l.o.g.\ we assume $X$, and $W$, do not have a zero column), that is, such that $X^\top e = e$, and 
 let us impose w.l.o.g.\ that the entries in each column of $W$ also sum to one (by the scaling degree of freedom in the factorization $WH$), that is, $W^\top e = e$. Then, we have }
 \begin{equation} \label{eq:sumtoone}
 X^\top e = e = (WH)^\top e = H^\top W^\top e = H^\top e, 
 \end{equation}
 so that $H$ has to be column stochastic\revise{, since $H \geq 0$ and $H^\top e = e$ is equivalent to $H(:,j)\in\Delta^r$ for all $j$}. 
 
\IEEEpubidadjcol

In this paper, we introduce bounded simplex-structured matrix factorization (BSSMF). 
 BSSMF \revise{imposes the columns of $W$ to belong to a hyperrectangle}
 , namely $W(i,j) \in [a_i, b_i]$ for all $i,j$ for some parameters $a_i \leq b_i$ for all $i$. For simplicity, 
 given $a \leq b \in \mathbb{R}^m$, 
 we denote the hyperrectangle 
 \[
 [a,b] = \{ y \in \mathbb{R}^m 
 \ | \ a_i \leq y_i \leq b_i \text{ for all } i \}, 
 \]
 and refer to it as an interval. \revise{The hyperrectangle constraint on $W$ is denoted as $W(:,j)\in[a,b]$ for all $j$.}  
Let us formally define BSSMF.

\begin{definition}[BSSMF] 
Let $X \in \mathbb{R}^{m \times n}$, let \mbox{$r \leq \min(m,n)$} be an integer, and 
let $a, b \in \mathbb{R}^m$ with $a \leq b$.  
The pair 
$(W,H) 
\in 
\mathbb{R}^{m \times r} 
\times 
\mathbb{R}^{r \times n}$ 
is a BSSMF of $X$ of size $r$ for the interval $[a,b]$ if 
\[
W(:,k) \in [a,b] \text{ for all } k, 
\quad  
H \geq 0, \quad \text{ and } \quad  
H^\top e = e. 
\]
\end{definition}  
\revise{Since the columns of $H$ define convex combinations, the convex hull of the columns of  $X=WH$ is contained in the convex hull of the columns of $W$, which is itself contained in the hyperrectangle $[a,b]$. 
This implies that the hyperrectangle $[a,b]$ must contain the columns of the data matrix, $X=WH$.} 

BSSMF reduces to SSMF 
when $a_i = -\infty$ and $b_i = +\infty$ for all $i$. 
When $X \geq 0$, BSSMF reduces to NMF 
when $a_i = 0$ and $b_i = +\infty$ for all $i$, after a proper normalization of $X$; see the discussion around Equation~\eqref{eq:sumtoone}.

\paragraph{Outline and contribution of the paper}

The paper is organized as follows. 
In Section~\ref{sec_motiv}, we explain the motivation of introducting BSSMF.
In Section~\ref{sec_algo}, we propose an efficient algorithm for BSSMF.
In Section~\ref{sec_identif}, we provide an  identifiability result for BSSMF, 
which follows from an identifiability results for NMF. 
In Section~\ref{sec_numexp}, we illustrate the effectiveness of BSSMF on two applications: \revise{
\begin{itemize}
\item Image feature extraction: the entries of $X$ are pixel intensities. For example, for a gray level image, the entries of $X$ belong to the interval $[0,255]$.

\item Recommender systems: 
the entries of $X$ are ratings of users for some items (e.g., movies). These ratings belong to an interval, e.g., [1,5] for the Netflix and MovieLens \revise{datasets}.

\end{itemize}}

\begin{remark}[Extended conference paper] 
This paper is an extended version of our conference paper~\cite{vuthanh2022bounded}. It provides significant new material: 
\begin{itemize}

    \item A more thorough discussion on the background and the motivations to introduce BSSMF; see Section~\ref{sec_motiv}. 
    
    \item A new algorithm handling missing data 
    and accelerated via data centering; see Section~\ref{sec_algo}. 
    
    \item Illustrations, examples, and proof for the identifiability of BSSMF; see Section~\ref{sec_identifiability_bssmf}. 
    
    \item New numerical experiments on \revise{synthetic, and} the MNIST and MovieLens \revise{datasets}; see Sections~\ref{sec_convspeed} and~\ref{sec_numexp}.  
    
\end{itemize}
\end{remark} 







\section{Motivation of BSSMF} \label{sec_motiv}

The motivation to introduce BSSMF is mostly fourfold; this is described in the next four paragraphs. 

\paragraph{Bounded low-rank approximation}

When the data naturally belong to intervals, imposing the approximation to belong to the same interval allows to provide better approximations, taking into account this prior information. 
\revise{Imposing that the entries in $W$ belong to some interval and that $H$ is column stochastic resolves this issue. BSSMF implies that the columns of the  approximation $WH$ belong to the same interval as the columns of $W$. In fact,  for all $j$, 
\[
X(:,j)  \; \approx \;  W H(:,j)  \; \in \;  [a,b], 
\]
since $W(:,k) \in [a,b]^m$ for all $k$, and the entries of ${H(:,j)}$ are nonnegative and sum to one. }

Another closely related model was proposed in~\cite{liu2021factor} where the entries of the factors $W$ and $H$ are required to belong to bounded intervals.  
The authors showed that their model is suitable for clustering. Nonetheless, it is not clear how to choose the lower and upper bounds on the entries of $W$ and $H$ to obtain tight lower and upper bounds for their product $WH$. \revise{With BSSMF the choice for the lower and upper bounds is easier, e.g., choosing $a_i$ and $b_i$ to be the smallest and largest entry in $X(i,:)$, respectively, that is, bounding $W$ in the same way the data matrix is; see Section~\ref{sec_identifiability_bssmf} for more details.} 

\paragraph{Interpretability} 




\revise{BSSMF allows us to easily interpret both factors: the columns of $W$ can be interpreted in the same way as the columns of $X$ (e.g., as movie ratings, or pixel intensities), while the columns of $H$ provide a soft clustering of the columns of $X$ as they are column stochastic. }
 BSSMF can be interpreted geometrically similarly as SSMF and NMF: the convex hull of the columns of $W$, $\conv(W)$, must contain $\conv(X)$, since $X(:,j) = WH(:,j)$ for all $j$ where $H$ is column stochastic, while it is contained in 
the hyperrectangle $[a,b]$:
\[
\conv(X) 
\quad \subseteq \quad 
\conv(W) 
\quad \subseteq \quad 
\revise{[a,b].}
\] 

\revise{Imposing bounds on the approximation, via the element-wise constraints $a \leq WH \leq b$ for some $a, b \in \mathbb{R}$, was proposed in~\cite{kannan2014bounded} and applied successfully to recommender systems. 
However, this model does not allow to interpret the basis factor, $W$, in the same way as the data. Some elements in $W$ will probably be out the rating range because $W$ is not directly constrained. Hence, the basis elements in $W$ can only be interpreted as ``eigen users'', while with BSSMF, the basis elements can be interpreted as virtual meaningful users. 
It is also difficult to interpret the factor $H$ as it could contain negative contributions. In fact, only imposing $a \leq WH \leq b$ typically leads to dense factors $W$ and $H$ (that is, factors that do not contain many zeros, as opposed to sparse factors), 
while in most applications interpretability usually comes with a certain sparsity degree in at least one of the factors.}

A closely related model \revise{that tackles blind source separation} is bounded component analysis (BCA) proposed in~\cite{cruces2010bounded, erdogan2013class}, where the \revise{sources} are assumed to belong to compact sets (hyperrectangle being a special case), while no constraints is imposed on \revise{the mixing matrix}. Again, without any constraints on \revise{the mixing matrix}, BCA will \revise{generate} dense factors \revise{with negative linear combinations} which are difficult to interpret. 
\revise{Let us note that their motivation is different than ours, as their objective is to extract mixed sources, while ours is to extract interpretable features and decompose data through them. In \cite{mansour2002blind}, the authors also proposed a blind source separation algorithm for bounded sources based on geometrical concepts. The mixtures are assumed to belong to a parallelogram. The proposed separation technique is relies on mapping this parallelogram to a rectangle. Again, their objective is to extract mixed sources. Nonetheless, working with a domain different than a hyperrectangle could be of interest for future work.}

\paragraph{Identifiability} 

Identifiability is key in practice as it allows to recover the ground truth that generated the data; see the discussion 
in~\cite{xiao2019uniq, kueng2021binary} and the references therein.
\revise{A drawback of SSMF is that it is never identifiable, see \Cref{sec_identif_ssmf} for further details. On the counterpart NMF can be identifiable, which is discussed in \Cref{sec_identif_nmf}. Nonetheless, the conditions are not mild. For NMF to be identifiable, it is necessary that the supports of the columns of $W$ (that is, the set of non-zero entries) are not contained in one another (this is called a Sperner family), and similarly for the supports of the rows of $H$; 
see, e.g.,~\cite{moussaoui2005non, laurberg2008theorems}. 
This requires the presence of zeros in each column of $W$ and row of $H$,  which can be a strong condition in some applications. For example, in hyperspectral unmixing, $W$ is typically not sparse because it recovers spectral signatures of constitutive materials which are typically positive. 
Although the conditions for NMF (and SSMF) to be identifiable can be weakened using additional constraints and regularization terms, it then requires hyperparameter tuning procedures. 
In~\cite{tatli2021generalized}, they propose a model where the columns of $H$ belong to a polytope. Using a maximum volume criterion on the convex hull of $H$, their model is identifiable under the condition that the convex hull of $H$ contains the ellipsoid of maximum volume inscribed in the constraining polytope. The use of the maximum volume criterion also requires hyperparameter tuning.}
\revise{In~\cite{cruces2010bounded, erdogan2013class}, the sources are identifiable by optimizing some geometric criterion, 
respectively minimizing a perimeter,  
and maximizing the ratio between the volume of an ellipsoid and the volume of a hyperrectangle.  
These identifiability conditions are not relevant to our model.
} 
As we will see in Section~\ref{sec_identif}, BSSMF is identifiable under relatively mild conditions, 
while it does not require parameter tuning, as opposed to most regularized \revise{structured matrix factorization}
models that are identifiable. \revise{Let us note that it is also possible to formulate identifiable nonlinear matrix approximation models like the bilinear model of~\cite{deville2019separability}, but this is out of the scope of this paper.}

\paragraph{Robustness to overfitting} 

Another drawback of NMF and SSMF is that they are rather sensitive to the choice 
of~$r$. When $r$ is \revise{chosen} too large, these two models are over-parameterized and will \revise{typically} lead to overfitting. 
This is a well-known behaviour that can be addressed with additional regularization terms that need to be fine tuned~\cite{rendle2021revisiting}. As we will see experimentally in \Cref{sec_robust} for matrix completion, 
without any parameter tuning, BSSMF is much more robust to overfitting than NMF and unconstrained matrix factorization. The reason is that the additional bound constraints on $W$ and sum-to-one constraint on $H$ prevents columns of $W$ and of $WH$ \revise{from going} outside the feasible range, \revise{$[a,b]$.}  \revise{In turn, BSSMF will be less sensitive to noise and an overestimation of $r$. For example, an outlier that falls outside the feasible set $[a,b]$ will not pose problems to BSSMF, while it may significantly impact the NMF and SSMF solutions.}


\section{Inertial block-coordinate descent algorithm for BSSMF} \label{sec_algo}

In this paper, we consider the following BSSMF problem 
\begin{equation}
\label{eq:BSSMF}
\begin{split}
\min_{W,H} g(W,H)&:=\frac{1}{2}\| X - WH \|_F^2 \\
\text{ such that } & W(:,k) \in [a,b] \text{ for all } k, \\
& H \geq 0, \text{ and } H^\top e=e, 
\end{split}
\end{equation} 
that uses the squared Frobenius norm to measure the error of the approximation.

\subsection{Proposed algorithm}

Most NMF algorithms rely on block coordinate descent methods, that is, they update a subset of the variables at a time, such \revise{as} the popular multiplicative updates of Lee and Seung~\cite{lee2001algorithms}, the \revise{hierarchical} alternating least squares algorithm~\cite{Cichocki07HALS, GG12}, and a fast gradient based algorithm~\cite{guan2012nenmf}; see, e.g.,~\cite[Chapter 8]{gillis2020} and the references therein for more detail. 
  More recently, an inerTial block majorIzation minimization framework for non-smooth non-convex opTimizAtioN (TITAN) was introduced in~\cite{hien2023inertial} and has been shown to be particularly powerful to solve matrix and tensor factorization problems~\cite{hien2019extrapolNMF, man2021accelerating, vuthanh21}. 

To solve~\eqref{eq:BSSMF}, we therefore apply TITAN which updates one block $W$ or $H$ at a time while fixing the value of the other block. In order to update $W$ (resp.\ $H$), TITAN chooses a block surrogate function for $W$ (resp.\ $H$), embeds an inertial term to this surrogate function and then minimizes the obtained inertial surrogate function. We have $\nabla_W g(W,H)=-(X-WH) H^\top$ which is Lipschitz continuous in $W$ with the Lipschitz constant $\|HH^\top\|$. Similarly, $\nabla_H g(W,H)=-W^\top (X-WH)$ is Lipschitz continuous in $H$ with constant $\|W^\top W\|$. Hence, we choose the Lipschitz gradient surrogate for both $W$ and $H$ and choose the Nesterov-type acceleration as analysed in \cite[Section 4.2.1]{hien2023inertial} and \cite[Remark 4.1]{hien2023inertial}, see also~\cite[Section 6.1]{hien2023inertial} and~\cite{vuthanh21} for similar applications. 

In the case of missing entries in $X$, let us consider the more general model 
\begin{equation}
\label{eq:WBSSMF}
\begin{split}
\min_{W,H} g(W,H) & :=\frac{1}{2}\| M\circ(X - WH) \|_F^2 \\
\text{ such that } & W(:,k) \in [a,b] \text{ for all } k, \\
& H \geq 0, \text{ and } H^\top e=e,    
\end{split}
\end{equation} 
where $\circ$ corresponds to the Hadamard product, and $M$ is a weight matrix which can model missing entries using $M(i,j) = 0$ when  $X(i,j)$ is missing, and $M(i,j) = 1$ otherwise. It can also be used in other contexts;  
see, e.g., \cite{gabriel1979lower, SJ03, gillis2011low}.
TITAN can also be used to solve~\eqref{eq:WBSSMF}, where the gradients are equal to ${\nabla_W g(W,H)=-(M \circ (X-WH)) H^\top}$ and ${\nabla_H g(W,H)=-W^\top (M\circ  (X-WH))}$. We acknowledge that the identifiability result that will be presented in Section~\ref{sec_identif} does not hold for the case where some data are missing, this is an interesting direction of future research. 
\cref{alg:BSSMF} describes TITAN for solving the general problem~\eqref{eq:WBSSMF}, where $[.]^a_b$ is the column-wise projection on \revise{$[a,b]$ }
and $[.]_{\Delta^r}$ is the column wise projection on the simplex $\Delta^r$.
When some data \revise{are} missing, the Lipschitz constant of the gradients relatively to $W$ and $H$ could be smaller than $\|HH^\top\|$ and $\|W^\top W\|$, respectively. 
Relatively to $H$ for instance, a smaller Lipschitz constant would be $\max_i\|W^\top (M(:,i)e^\top) \circ W\|$. We arbitrarily choose to keep $\|HH^\top\|$ and $\|W^\top W\|$ even when some data \revise{are} missing since those values are faster to compute.
\revise{Due to our derived algorithm being a particular instance of TITAN with Lipschitz gradient surrogates~\cite[Section 4.2]{hien2023inertial}}, \cref{alg:BSSMF} guarantees a subsequential convergence, that is, every limit point of the generated sequence is a stationary point of Problem~\eqref{eq:BSSMF}. The Julia code for \cref{alg:BSSMF} is available on gitlab\footnote{\href{https://gitlab.com/vuthanho/bssmf.jl}{https://gitlab.com/vuthanho/bssmf.jl}} (a MATLAB code is also available on gitlab\footnote{\href{https://gitlab.com/vuthanho/bounded-simplex-structured-matrix-factorization}{https://gitlab.com/vuthanho/bounded-simplex-structured-matrix-factorization}} but it does not handle missing data for now). \revise{We omit the implementation details here, but let us mention that when data \revise{are} missing, our Julia implementation does not compute the Hadamard product with $M$ explicitly but rather takes advantage of the sparsity of the data by using multithreading to improve the computational time. The projections $[.]^a_b$ and $[.]_{\Delta^r}$ are also computed using multithreading.}

\begin{algorithm}[htb!]
\caption{BSSMF}
\label{alg:BSSMF}
\SetKwInOut{Input}{input}
\SetKwInOut{Output}{output}
\DontPrintSemicolon
\Input{Input data matrix $X \in \mathbb{R}^{m \times n}$, bounds $a\leq b\in\mathbb{R}^m$, initial factors $W \in \mathbb{R}^{m \times r}$ s.t. $W(:,k) \in [a,b]$ for all $k$ and simplex structured $H \in \mathbb{R}^{r \times n}_+$, weights $M\in[0,1]^{m\times n}$}
\Output{$W$ and $H$}
$\alpha_1=1$, $\alpha_2=1$, $W_{old}=W, H_{old}= H$, $L_W^{prev}=L_W=\| H H^\top\|_2$,  $L_H^{prev}=L_H=\|W^\top W\|_2$\;
\Repeat{some stopping criteria is satisfied\nllabel{alg:BSSMF:line:outerloop}}{
    \While{stopping criteria not satisfied\nllabel{alg:BSSMF:line:Winnerloop}}{
        $\alpha_{0}=\alpha_1, \alpha_1=(1+\sqrt{1+4\alpha_0^2})/2$\;
        $\beta_{W}=\min\left[~(\alpha_0-1)/\alpha_{1},0.9999\sqrt{L_W^{prev}/L_W} ~\right] $\;
        $\overbar{W}\leftarrow W+\beta_{W}(W-W_{old}) $\;
        $W_{old}\leftarrow W$\;
        $W \leftarrow \left[\overbar{W}+\frac{(M\circ(X-\overbar{W}H))H^{\top}}{L_W}\right]_a^b$\;
        $ L_W^{prev} = L_W$\;
    }
    $ L_H \leftarrow \|W^\top W\|_2 $\;
    \While{stopping criteria not satisfied\nllabel{alg:BSSMF:line:Hinnerloop}}{
        $\alpha_{0}=\alpha_2, \alpha_2=(1+\sqrt{1+4\alpha_0^2})/2$\;
    	$ \beta_{H}=\min\left[~(\alpha_0-1)/\alpha_{2},0.9999\sqrt{L_H^{prev}/L_H} ~\right] $\;
    	$ \overbar{H} \leftarrow H+\beta_{H}(H-H_{old}) $\;
    	$H_{old}\leftarrow H$\;
    	$ H \leftarrow \left[\overbar{H}+\frac{W^{\top} (M\circ(X-W\overbar{H}))}{L_H} \right]_{\Delta^r}$\;\nllabel{alg:BSSMF:line:proj}
    	$L_H^{prev} \leftarrow L_H$\;
    }
    $ L_W = \|HH^\top\|_2 $\;
}
\end{algorithm}

A simple choice to initialize the factors, $W$ and $H$, in \cref{alg:BSSMF} is to randomly initialize them: for all $i$, each entry of $W(i,:)$ is generated using the uniform distribution in the interval $[a_i,b_i]$, while $H$ is generated using a uniform distribution in $[0,1]^{r\times n}$ whose columns are then projected on the simplex $\Delta^r$.


\subsection{Accelerating BSSMF algorithms via data centering}  \label{sec_centering}

\revise{Not only t}he BSSMF model is invariant to translations of the input data (this is explained in details in \Cref{sec_identifiability_bssmf}), 
\revise{but also the optimization}, because of the simplex constraints. In particular, for any $c\in\mathbb{R}$ and denoting 
\revise{$J$} the matrix of all ones of appropriate dimension, minimizing
\begin{equation}
\label{eq:SSMF}
f(W,H):=\frac{1}{2}\| X - WH \|_F^2
\end{equation}
or 
\begin{equation}
\label{eq:TSSMF}
f_c(W,H):=\frac{1}{2}\| X-cJ - (W-cJ)H \|_F^2, 
\end{equation} 
is equivalent in BSSMF, since $cJ = cJH$ 
as 
$H$ is column stochastic. 
However, \emph{outside the feasible set, $f$ and $f_c$ do not have the same topology}. 
Computing the gradients, we have $\nabla_H f(W,H)=W^\top(WH-X)$ which is Lipschitz continuous in $H$ with the Lipschitz constant $\|W^\top W\|$, and
$\nabla_H f_c(W,H)=W_c^\top(W_cH-X_c)$ which is Lipschitz continuous in $H$ with the Lipschitz constant $\|W_c^\top W_c\|$, where $W_c=W-cJ$ and $X_c=X-cJ$. 
Particularly, for BSSMF, since $W$ can be interpreted in the same way as $X$, we can expect $\operatorname{mean}(X)\approx\operatorname{mean}(W)$, where $\operatorname{mean(.)}$ is the empirical mean of the entries of the input. Consequently, if we choose $c=\operatorname{mean}(X)$, we expect\footnote{We empirically noticed that, very often, with $c=\operatorname{mean}(W)$, $\|W_c^\top W_c\|$ is of the same order of $\min_{c}\|W_c^\top W_c\|$. 
} the Lipschitz constant $\|W_c^\top W_c\|$ to be smaller than $\|W^\top W\|$. A smaller Lipschitz constant means that, when updating $H$, the gradient steps are allowed to be larger without losing any convergence guarantee. Hence, with the right translation on our data $X$, the optimization problem on $H$ is unchanged on the feasible set but \cref{alg:BSSMF} can be accelerated. 

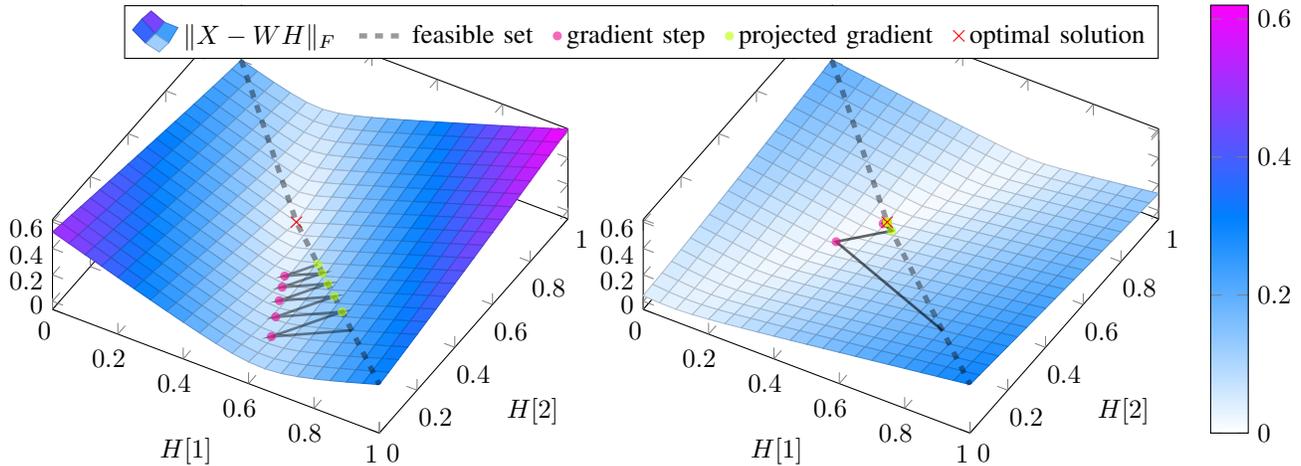
\begin{figure*}[htb!]
    \centering
    \begin{tikzpicture}
 
\begin{groupplot}[view={30}{70}, group style={group size=2 by 1}, colormap/cool, point meta min=0, point meta max = 0.62, zmax=0.62, xlabel={$H[1]$}, ylabel={$H[2]$}, legend columns = -1,legend style={at={(1,1)},/tikz/every even column/.append style={column sep=0.2cm}}]

\nextgroupplot[]
\addplot3 [surf,mesh/ordering=y varies,opacity=1] table{data/error_notcentered.txt};
\addplot3 [line width = 2pt,opacity=0.4,dashed] table{data/feasibleset.txt};
\addplot3 [line width = 1pt,opacity=0.46] table{data/gradientstep_notcentered.txt};
\addplot3 [only marks, color = magenta,opacity=0.6,mark size = 1.5pt] table{data/gradientonly_notcentered.txt};
\addplot3 [only marks, color = lime,opacity=0.6,mark size = 1.5pt] table{data/projonly_notcentered.txt};
\addplot3 [only marks, mark = x,mark size = 3pt,color = red] coordinates {(0.4,0.6,0)};

\nextgroupplot[colorbar]
\addplot3 [surf,mesh/ordering=y varies] table{data/error_centered.txt};
\addlegendentry{$\|X-WH\|_F$}
\addplot3 [line width = 2pt,opacity=0.4,dashed] table{data/feasibleset.txt};
\addlegendentry{feasible set}
\addplot3 [line width = 1pt,opacity=0.6,forget plot] table{data/gradientstep_centered.txt};
\addplot3 [only marks, color = magenta,opacity=0.6,mark size = 1.5pt] table{data/gradientonly_centered.txt};
\addlegendentry{gradient step}
\addplot3 [only marks, color = lime,opacity=0.6,mark size = 1.5pt] table{data/projonly_centered.txt};
\addlegendentry{projected gradient}
\addplot3 [only marks, mark = x,mark size = 3pt,color = red] coordinates {(0.4,0.6,0)};
\addlegendentry{optimal solution}
\end{groupplot}
 
\end{tikzpicture}
    \caption{Influence of centering the data on the cost function topology regarding $H$ via a small example ($m=2,r=2,n=1$). Left: without centering. Right: with centering. 
    Five projected gradient steps are shown, decomposed through one gradient descent step followed by its projection onto the feasible set.}
    \label{fig:centering_topo}
\end{figure*}


Let us illustrate this behavior on a small example with \mbox{$m=2$}, $n=1$, $r=2$. We choose $$
{X=\begin{pmatrix}0.4 & 0.3\\0.7 & 0.2\end{pmatrix}\begin{pmatrix}0.4\\0.6\end{pmatrix}}.
$$ 
We fix $$
W=\begin{pmatrix}0.4 & 0.3\\0.7 & 0.2\end{pmatrix}, 
$$ 
and try to solve\revise{, with respect to $H$,} \cref{eq:SSMF} and \cref{eq:TSSMF}  with $c=\operatorname{mean}(X)$. 
We perform 5 projected gradient steps and display the results on \Cref{fig:centering_topo}. On the left, 5 projected gradient steps are performed to update $H$ based on the original data $X$. On the right, 5 projected gradient steps are performed to update $H$ based on the centered data $X$. 
The feasible sets (in dash) are exactly the same, and therefore the optimal solutions are also the same. 
However, we observe that the landscape of the cost function outside the feasible region is smoother when the data \revise{are} centered. This allows the solver to converge faster towards the optimal solution, as the gradients point better towards the optimal solution and the stepsizes are larger. 
The improvement regarding the convergence speed by applying centering with real data will probably not be as drastic as in this small example. Still, minimizing a smoother function is always advantageous, and this will be shown empirically on real data in \Cref{sec_convspeed}.

\subsection{Convergence speed and effect of acceleration strategies on real data} \label{sec_convspeed}

In this subsection, the goal is twofold: 
(1)~show the effect of the extrapolation in TITAN by comparing \cref{alg:BSSMF} to a non-extrapolated block coordinate descent, and 
(2)~show the acceleration effect of centering the data.

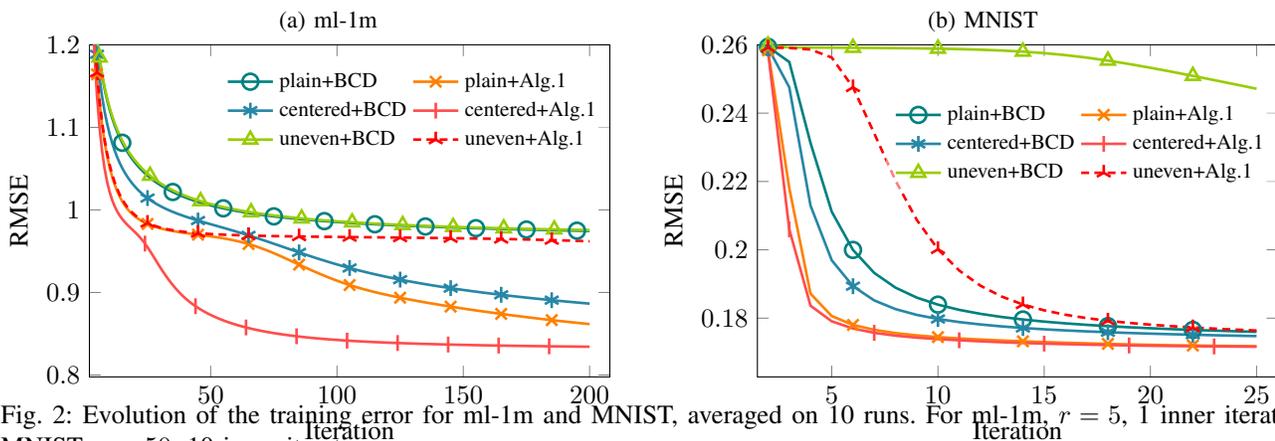
\begin{figure*}[htb!]
    \begin{subfigure}{0.48\textwidth}
        \caption{ml-1m}
        \label{fig:ml1m_extra_center}
        \vspace{-2ex}
        \begin{tikzpicture}
\begin{axis}[
            width=8.5cm,
            height=6cm,
            ymax = 1.2,
            xmin = 2,
            xmax = 208,
            ylabel = {RMSE},
            xlabel = {Iteration},
            cycle list name=exotic,
            legend columns = 2,
            mark size = 3pt,
            mark repeat = 20,
            legend cell align={left},
            legend style={font=\footnotesize,at={(1,0.95)},anchor=north east,draw=none,fill opacity=0.5,text opacity=1}]

\addplot+[mark phase=10,mark = o,line width = 1pt] table[x index = 0, y index = 1]{data/ml1m_extra_center.txt};
\addlegendentry{plain+BCD}
\addplot+[mark = x,line width = 1pt] table[x index = 0, y index = 2]{data/ml1m_extra_center.txt};
\addlegendentry{plain+Alg.\ref{alg:BSSMF}}
\addplot+[mark = asterisk,line width = 1pt] table[x index = 0, y index = 3]{data/ml1m_extra_center.txt};
\addlegendentry{centered+BCD}
\addplot+[mark = |,line width = 1pt] table[x index = 0, y index = 4]{data/ml1m_extra_center.txt};
\addlegendentry{centered+Alg.\ref{alg:BSSMF}}
\addplot+[mark=triangle,line width = 1pt] table[x index = 0, y index = 5]{data/ml1m_extra_center.txt};
\addlegendentry{uneven+BCD}
\addplot+[mark=Mercedes star,line width = 1pt] table[x index = 0, y index = 6]{data/ml1m_extra_center.txt};
\addlegendentry{uneven+Alg.\ref{alg:BSSMF}}

\end{axis}
\end{tikzpicture}
    \end{subfigure}
    \begin{subfigure}{0.48\textwidth}
        \caption{MNIST}
        \label{fig:mnist_extra_center}
        \vspace{-2ex}
        \begin{tikzpicture}
\begin{axis}[
            width=8.5cm,
            height=6cm,
            ymax = 0.26,
            xmin = 1.5,
            xmax = 26,
            ylabel = {RMSE},
            xlabel = {Iteration},
            cycle list name=exotic,
            mark size = 3pt,
            mark repeat = 4,
            legend columns = 2,
            legend cell align={left},
            legend style={font=\footnotesize,at={(1,0.85)},anchor=north east,draw=none,fill opacity=0.5,text opacity=1}]

\addplot+[mark = o,line width = 1pt] table[x index = 0, y index = 1]{data/mnist_extra_center.txt};
\addlegendentry{plain+BCD}
\addplot+[mark = x,line width = 1pt] table[x index = 0, y index = 2]{data/mnist_extra_center.txt};
\addlegendentry{plain+Alg.\ref{alg:BSSMF}}
\addplot+[ mark = asterisk,line width = 1pt] table[x index = 0, y index = 3]{data/mnist_extra_center.txt};
\addlegendentry{centered+BCD}
\addplot+[mark phase=2, mark = |,line width = 1pt] table[x index = 0, y index = 4]{data/mnist_extra_center.txt};
\addlegendentry{centered+Alg.\ref{alg:BSSMF}}
\addplot+[mark = triangle,line width = 1pt] table[x index = 0, y index = 5]{data/mnist_extra_center.txt};
\addlegendentry{uneven+BCD}
\addplot+[mark = Mercedes star,line width = 1pt] table[x index = 0, y index = 6]{data/mnist_extra_center.txt};
\addlegendentry{uneven+Alg.\ref{alg:BSSMF}}
\end{axis}
\end{tikzpicture}
    \end{subfigure}\\[-8ex]
    \caption{Evolution of the training error for ml-1m and MNIST, averaged on 10 runs. For ml-1m, $r=5$, 1 inner iteration. For MNIST, $r=50$, 10 inner iterations.}
    \label{fig:extra_center}
\end{figure*} 
\revise{We will apply the} BSSMF \revise{model on} MNIST and ml-1m \revise{(}these two datasets are properly introduced respectively in \Cref{sec_inter} and \Cref{sec_robust}\revise{)} in six different scenarios: 3 data related scenarios $\times$ 2 algorithmic related scenarios. The data scenarios are raw data, centered data, and data to which a positive offset is added (respectively called `plain', `centered' and `uneven' in \Cref{fig:extra_center}). Note that `uneven', that is, adding a positive constant, will worsen the landscape, as $\|W_c^\top W_c\|$ will increase (since $a \geq 0$). This scenario will allow to further validate our observation about the acceleration effect of preprocessing. 
For each data case, 2 algorithms are tested: 
(1)~\cref{alg:BSSMF}, and (2)~a standard block coordinate descent (BCD) which is \cref{alg:BSSMF} where the $\beta$'s are fixed to 0; this corresponds to the popular proximal alternating linearized minimization (PALM) algorithm~\cite{bolte2014proximal}. 
When the algorithms are compared on the same data scenario, \cref{alg:BSSMF} always converges faster and to a better solution than BCD. We also observe that when the data \revise{are} centered, applying the same algorithm always lead to faster convergence than on the plain case. On ml-1m (\Cref{fig:ml1m_extra_center}), applying BCD on the centered data is almost as fast as applying \cref{alg:BSSMF} in the plain case, meaning that a good preprocessing is almost as important as a good acceleration strategy. If we look at the \revise{root-mean-square error (RMSE)} on the test set, centered data \revise{are} even more important than a good acceleration strategy. Actually, at the end of the experiment from \Cref{fig:ml1m_extra_center}, on the test set, centered+BCD has \revise{an} RMSE of 0.91 while plain+\cref{alg:BSSMF} has \revise{an} RMSE of 0.94. Still on ml-1m, \cref{alg:BSSMF} benefits remarkably from centering the data in comparison to the plain case. We also note an improvement on MNIST (\Cref{fig:mnist_extra_center}) which is not as noticeable as on ml-1m (probably because this problem is less difficult to solve, which is corroborated by the number of iterations required to converge). Here, the uneven case is just shown to remind that the data points cloud position is of much concern and should not be neglected. Globally, regardless of the dataset, applying \cref{alg:BSSMF} on centered data is the best strategy \revise{as compared with using plain data}. As a consequence, it will be our default choice for the experiments in \Cref{sec_numexp}. 

\revise{As mentioned above, when entries are missing, \cref{alg:BSSMF} can take advantage of the sparsity of the data and uses multithreading. We report in~\Cref{tab:time_ml-1m} the computation time of~\cref{alg:BSSMF} in the experiment settings of~\Cref{fig:ml1m_extra_center}, given by the macro \texttt{@btime} from the package \texttt{BenchmarkTools.jl}. When the dataset is full, like with MNIST, multithreading is only used for the projections $[.]^a_b$ and $[.]_{\Delta^r}$. The computation time in the settings of the experiment in~\Cref{fig:mnist_extra_center} is reported in~\Cref{tab:time_mnist}. Note that there is no distinction between~\cref{alg:BSSMF} and BCD in terms of computation time because the computation of the acceleration is negligible compared to the other computations.
\begin{table}[ht]
    \centering
    \begin{tabular}{l|c|c|c|c|c|c|c}
    \# threads & 1 & 2 & 4 & 6 & 8 & 10 & 12\\ \hline
    time (s) & 30.53 & 5.14 & 2.98 & 2.85 & 3.00 & 2.78 & 3.31 
    \end{tabular}
    \caption{Computation time of~\cref{alg:BSSMF} in the experiment settings of~\Cref{fig:ml1m_extra_center} depending on the number of used threads.}
    \label{tab:time_ml-1m}
\end{table}
\begin{table}[ht]
    \centering
    \begin{tabular}{l|c|c|c|c|c|c|c}
    \# threads & 1 & 2 & 4 & 6 & 8 & 10 & 12\\ \hline
    time (s) & 27.79 & 21.92 & 16.67 & 15.22 & 15.73 & 16.01 & 16.65 
    \end{tabular}
    \caption{Computation time of~\cref{alg:BSSMF} in the experiment settings of~\Cref{fig:mnist_extra_center} depending on the number of used threads.}
    \label{tab:time_mnist}
\end{table}
}


\section{Identifiability} \label{sec_identif} 

Let us first define a factorization model. 
\begin{definition}[Factorization model] 
Given a matrix $X \in \mathbb{R}^{m \times n}$, and an integer $r \leq \min(m,n)$, a  factorization model is an optimization model of the form 
\begin{equation} \label{eq:factomodel}
\begin{split}
\min_{W \in \mathbb{R}^{m \times r}, H \in \mathbb{R}^{r \times n}} & 
g(W,H) \\
\text{ such that } & X = WH, \\
& W \in \Omega_W \text{ and } H \in \Omega_H,     
\end{split}
\end{equation}
where $g(W,H)$ is some criterion, and $\Omega_W$ and $\Omega_H$ are the feasible sets for $W$ and $H$, respectively. 
\end{definition}

Let us define the identifiability of a factorization model, and essential uniqueness of a pair $(W,H)$.  

\begin{definition}[Identifiability / Essential uniqueness]
\label{def:identifiability}
Let $X \in \mathbb{R}^{m \times n}$, and $r \leq \min(m,n)$ be an integer. 
Let $(W,H)$ be a solution to a given factorization model~\eqref{eq:factomodel}. 
The pair $(W,H)$ is essentially unique 
for the factorization model~\eqref{eq:factomodel} of matrix $X$ 
if and only if any other pair $(W',H') \in \mathbb{R}^{m \times r} \times \mathbb{R}^{r \times n}$ that solves the factorization model~\eqref{eq:factomodel} satisfies, for all $k$, 
$$W'(:,k) = \alpha_k W(:,\pi(k)) $$
and 
$$H'(k,:) = \alpha_k^{-1} H(\pi(k),:), $$
where $\pi$ is a permutation of $\{1,2,\dots,r\}$, and $\alpha_k \revise{\neq} 0$ for all $k$. 
In other terms, $(W',H')$ can only be obtained as a permutation and scaling of $(W,H)$. 
In that case, the factorization model is said to be identifiable for the matrix~$X$.  
\end{definition} 

A key question in theory and practice is to determine conditions on $X$, $g$, $ \Omega_W$ and $\Omega_H$ that lead to identifiable factorization models; see, e.g., \cite{xiao2019uniq, kueng2021binary} for discussions.  

In the next three sections, we discuss the identifiability of SSMF, NMF, and BSSMF.

\subsection{Simplex-structured matrix factorization (SSMF)} 
\label{sec_identif_ssmf}
Without further requirements, SSMF is never identifiable; which follows from a result for semi-NMF which is a factorization model \revise{that} requires only one factor, $H$, to be nonnegative~\cite{gillis2015exact}.  
Let $X = WH$ be an SSMF of $X$. We can obtain other SSMF of $X$ using the following transformation: for any $\alpha \geq 0$, let 
\[
W(\alpha) :=  
W \left( (1+\alpha) I - \revise{\frac{\alpha}{r}J} \right), 
\]
and
\[ 
\begin{split}
H(\alpha) 
:=& 
\left( 
 \frac{1}{1+\alpha} H 
 + \revise{\frac{\alpha}{(1+\alpha)r} J} 
\right) \\
=& 
\left( 
\frac{1}{1+\alpha} I + \revise{\frac{\alpha}{(1+\alpha)r} J}
\right) 
  H ,   
\end{split}
\]
where $I$ is the identity matrix of appropriate dimension, and the last equality follows from $e^\top H = e^\top$. 
The matrix $H(\alpha)$ is column stochastic since $H$ and $\frac{\revise{J}}{r}$ are. One can check that $(W(\alpha), H(\alpha))$ is not a permutation and scaling of $(W,H)$ for $\alpha > 0$, while $WH = W(\alpha) H(\alpha)$ since\footnote{This is an invertible M-matrix, with positive diagonal elements and negative off-diagonal elements, whose inverse is nonnegative~\cite{berman1994nonnegative}.}  
\[
\begin{split}
A(\alpha)  
 := & \left( 
(1+\alpha) I - \frac{\alpha}{r} \revise{J} \right)^{-1} \\
= &
\frac{1}{1+\alpha} I + \revise{\frac{\alpha}{(1+\alpha)r} J} . 
\end{split}
\] 
Geometrically, to obtain $W(\alpha)$, the columns of $W$ are moved towards the exterior of $\conv(W)$ and hence the convex hull of the column of $W(\alpha)$ contains the convex hull of the columns of $W$ and hence contains $\conv(X)$. This follows from the fact that  $W  = W(\alpha) A(\alpha)$, 
where $A$ is column stochastic.

To obtain identifiability of SSMF, one needs  either to impose additional constraint on $W$ and/or $H$ such as sparsity~\cite{abdolali2021simplex}, 
or look for a solution minimizing a certain function $g$. In particular, the solution $(W,H)$ that minimizes the volume of the convex hull of the columns of $W$ (see Theorem~\ref{th_identifminvolSSMF} below for a formula) is essentially unique given that $H$ satisfies the so-called sufficiently scattered condition (SSC). The SSC is defined as follows. 
\begin{definition}[Sufficiently scattered condition]  \label{def:ssc} 
The matrix $H \in \mathbb{R}^{r \times n}_+$ is sufficiently scattered if the following two conditions are satisfied:\index{sufficiently scattered condition!definition} \vspace{0.1cm}

\noindent [SSC1]  $\mathcal{C} = \{x \in \mathbb{R}^r_+ \ | \ e^\top x \geq \sqrt{r-1} \|x\|_2 \} \; \subseteq \; \cone(H)$, 
\revise{where $\cone(H) = \{ x \ | \ x = Hy, y \geq 0 \}$ denotes the conical hull of the columns of $H$.} 

\noindent [SSC2]  There does not exist any orthogonal matrix $Q$ such that $\cone(H) \subseteq \cone(Q)$, except for permutation matrices. (An orthogonal matrix\index{matrix!orthogonal} $Q$ is a square matrix such that $Q^\top Q = I$)\revise{,} 
\end{definition}

SSC1 requires the columns of $H$ to contain the cone $\mathcal{C}$, which is tangent to every facet of the nonnegative orthant; see \Cref{fig_geoSSC}.   
\begin{figure*}[ht!] 
    \centering
        \includegraphics[width=\textwidth]{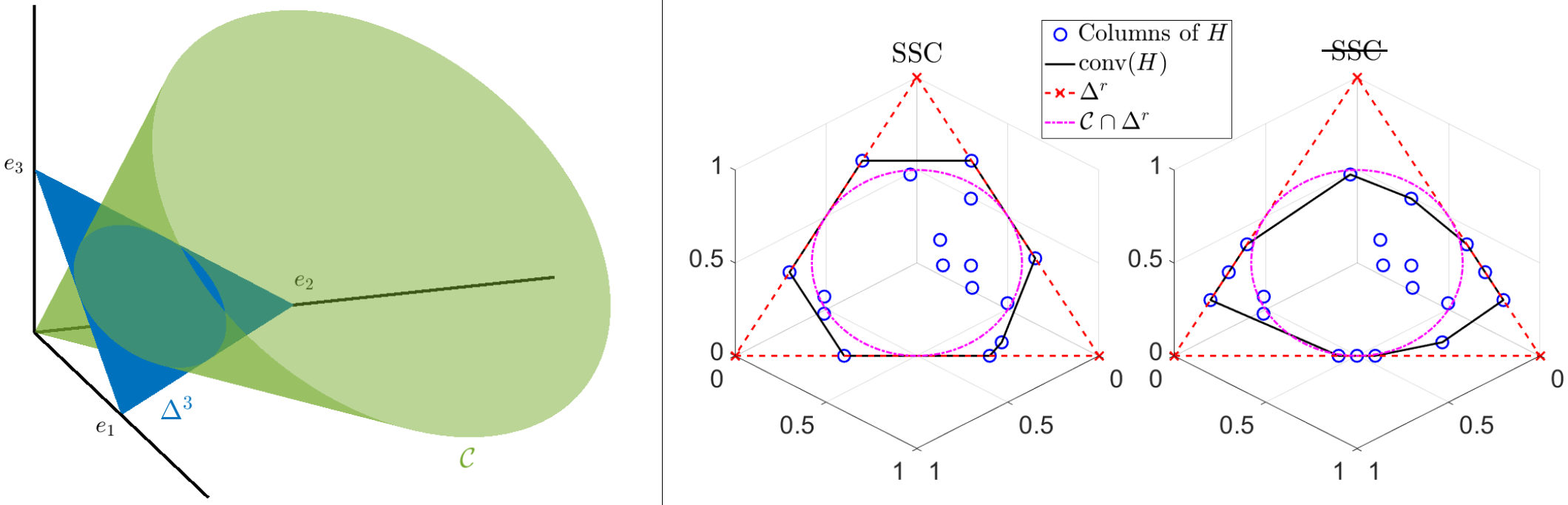} 
  \caption{Illustration of the \revise{SSC} in three dimensions. 
  On the left: the sets $\Delta^3$ and $\mathcal{C}$, they intersect at (0,0.5,0.5), (0.5,0,0.5), and (0.5,0.5,0).  
(This left figure is similar to~\cite[Fig. 2]{huang2013non} and we are grateful to the authors for providing us with the code to generate it.) 
On the right: examples of a matrix $H \in \mathbb{R}^{3 \times n}$ satisfying the SSC (left), and not  satisfying the SSC (right). The left (resp.\ right) figure is adapted from~\cite{huang2013non} (resp.~\cite{abdolali2021simplex}). 
    \label{fig_geoSSC}
    }
\end{figure*}    
Hence\revise{, satisfying SSC1} requires some degree of sparsity \revise{as $H$ needs to contain at least $r-1$ zeros per row~\cite[Th.~4.28]{gillis2020}}. 
SSC2 is a mild regularity condition which is typically satisfied when SSC1 is satisfied. For more discussions on the SSC, we refer the interested reader to \cite{xiao2019uniq} 
and \cite[Chapter 4.2.3]{gillis2020}, and the references therein. 
\revise{For SSMF, we have the following identifiability result.}  
\begin{theorem}\cite{fu2015blind, lin2015identifiability}  \label{th_identifminvolSSMF}
The minimum-volume SSMF factorization model,   
\revise{\[
\begin{split}
\min_{W \in \mathbb{R}^{m \times r}, H \in \mathbb{R}^{r \times n}} & \det(W^\top W) \\
\text{ such that } & X = WH \text{ and } H(:,j)\in\Delta^r \text{ for all } j, 
\end{split}
\]}
\noindent is identifiable
if the pair ${\revise{(W, H)}  \in \mathbb{R}^{m \times r} \times \mathbb{R}^{r \times n}}$
satisfies $\rank( \revise{W}) = r$ 
and $\revise{H}$ is sufficiently scattered.  
\end{theorem}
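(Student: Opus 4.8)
The plan is to reduce the identifiability claim to a statement about a single invertible $r\times r$ change-of-variables matrix, and then to use the geometry of the sufficiently scattered condition (SSC) to pin that matrix down. First I would settle the ranks. Since $\rank(W)=r$ and $H$ is sufficiently scattered, SSC1 gives $\mathcal{C}\subseteq\cone(H)$; as $\mathcal{C}$ is full-dimensional, this forces $\rank(H)=r$, and therefore $\rank(X)=\rank(WH)=r$. Consequently, any feasible pair $(W',H')$ with $X=W'H'$ and $W'\in\mathbb{R}^{m\times r}$ also has $\rank(W')=r$, so there is an invertible $Q\in\mathbb{R}^{r\times r}$ with
\[
W'=WQ^{-1},\qquad H'=QH .
\]
I would then push the feasibility constraints through this parametrization. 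Column-stochasticity $e^\top H'=e^\top$ together with $e^\top H=e^\top$ gives $(Q^\top e-e)^\top H=0$; since $H$ has full row rank this forces $Q^\top e=e$, i.e.\ $e^\top Q=e^\top$ (the columns of $Q$ sum to one). Nonnegativity $H'=QH\geq 0$ means that $Q$ maps every column of $H$, and hence the whole cone $\cone(H)$, into the nonnegative orthant $\mathbb{R}^r_+$.

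Next I would rewrite the objective: from $W'=WQ^{-1}$ we get $\det\!\big((W')^\top W'\big)=\det(Q)^{-2}\det(W^\top W)$, so minimizing the volume criterion is exactly maximizing $|\det Q|$ over the feasible set just described. Because the ground-truth pair $(W,H)$ is itself feasible (it corresponds to $Q=I$), any optimal solution satisfies $\det((W')^\top W')\leq\det(W^\top W)$, which immediately yields $|\det Q|\geq 1$.

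The heart of the argument, and the step I expect to be the main obstacle, is the matching upper bound:
\begin{quote}
If $H$ is sufficiently scattered, then every invertible $Q$ with $e^\top Q=e^\top$ and $QH\geq 0$ satisfies $|\det Q|\leq 1$, with equality if and only if $Q$ is a permutation matrix.
\end{quote}
To prove this I would pass to the dual cone. From $\mathcal{C}\subseteq\cone(H)$ and $Q\cone(H)\subseteq\mathbb{R}^r_+$ we get $Q\mathcal{C}\subseteq\mathbb{R}^r_+$, i.e.\ each row $q_i$ of $Q$ lies in the dual cone $\mathcal{C}^\ast=\{y:y^\top x\geq 0 \text{ for all } x\in\mathcal{C}\}$, while $e^\top Q=e^\top$ says $\sum_i q_i=e$. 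Thus bounding $|\det Q|$ becomes a constrained volume problem: among all tuples $(q_1,\dots,q_r)$ in $\mathcal{C}^\ast$ summing to $e$, the parallelepiped volume $|\det[q_1,\dots,q_r]|$ should be maximized precisely by the standard basis vectors (in some order), which correspond to $Q$ being a permutation matrix. The defining feature of $\mathcal{C}$, namely that it is tangent to every facet of the orthant, is exactly what makes $\mathcal{C}^\ast$ small enough to force this conclusion; I would first check the extreme case $r=2$ explicitly (there $\mathcal{C}=\mathbb{R}^2_+$ and the bound is an elementary $2\times2$ determinant computation) and then treat general $r$ via the polar description of $\mathcal{C}^\ast$. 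SSC2 is invoked to exclude the degenerate maximizers that are orthogonal but not permutations, so that the equality case consists of permutation matrices only.

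Combining the two bounds gives $|\det Q|=1$ and hence $Q=\Pi$ a permutation matrix, so $W'=W\Pi^{-1}$ and $H'=\Pi H$: the solution is a column permutation of $(W,H)$. The scaling factors $\alpha_k$ of Definition~\ref{def:identifiability} are all equal to one, since the column-stochasticity of both $H$ and $H'$ eliminates the scaling freedom. This is precisely essential uniqueness, which proves identifiability.
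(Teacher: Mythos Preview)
The paper does not supply a proof of this theorem; it is stated with citation to \cite{fu2015blind,lin2015identifiability} and used as a black box, so there is nothing in the paper to compare against directly. Your outline is in fact the standard route taken in those references: parametrize alternative solutions by an invertible change of basis $Q$, translate the simplex constraint into $e^\top Q=e^\top$, recast the volume minimization as maximizing $|\det Q|$, and then bound $|\det Q|\le 1$ via the geometry of the second-order cone $\mathcal{C}$ (SSC1) with equality forcing $Q$ orthogonal, after which SSC2 rules out non-permutation orthogonal matrices.

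One caution: the step you flag as ``the main obstacle'' is indeed where all the content lies, and your sketch there is still quite loose. In the cited proofs the inequality $|\det Q|\le 1$ is not obtained by an ad hoc optimization over $\mathcal{C}^\ast$ but via an explicit description of $\mathcal{C}^\ast$ (it is again a second-order cone, $\mathcal{C}^\ast=\{y:e^\top y\ge\|y\|_2\}$ when $\mathcal{C}$ is taken without the nonnegativity constraint) combined with Hadamard's inequality and the normalization $e^\top Q=e^\top$. Your plan to ``treat general $r$ via the polar description of $\mathcal{C}^\ast$'' is the right direction, but you should be aware that you need the polar of the pure second-order cone, and that the equality case of Hadamard delivers orthogonality of the rows, which is precisely where SSC2 enters to conclude $Q$ is a permutation through the inclusion $\cone(H)\subseteq\cone(Q^{-1})$. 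If you fill in that computation your argument will match the published proofs.
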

\revise{The quantity $\det(W^\top W)$ measures the volume of $\conv(W)$ within the column space of $W$. Note that this result has been generalized to the case where the columns of $H$ belong to a given polytope instead of the probability simplex; see~\cite{tatli2021generalized}.} 

\revise{In practice, because of noise and model 
misfit, 
SSMF optimization models need to balance the data fitting term which measures the discrepancy between $X$ and $WH$, and the volume regularization for $\conv(W)$. 
Typically a problem with objective function of the form 
\[
\|X-WH\|_F^2+\lambda\det(W^\top W), 
\] 
is solved. This  requires the tuning of the parameter $\lambda$, which is a nontrivial process~\cite{ang2019algorithms, Zhuang2019paramminvol}.  
}

\subsection{Nonnegative matrix factorization (NMF)} 
\label{sec_identif_nmf}

As opposed to SSMF, NMF decompositions can be identifiable without the use of additional requirements. \revise{The first identifiability result was proposed in \cite{donoho2004does}. Their conditions, based on separability, are quite strong. In the context of nonnegative source separation,~\cite{moussaoui2005non} proposed some necessary conditions for the uniqueness of the solution.}
One of the most relaxed sufficient condition for identifiability is based on the SSC.  
\begin{theorem}\cite[Theorem 4]{huang2013non} \label{th:uniqNMFSSC}
If $W^\top \in \mathbb{R}^{r \times m}$ and $H \in \mathbb{R}^{r \times n}$ are sufficiently scattered,  
then the Exact NMF $(W,H)$ of $X=WH$ of size $r = \rank(X)$ is essentially unique. 
\end{theorem}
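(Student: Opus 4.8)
The plan is to show that any two exact NMFs of $X$ of size $r=\rank(X)$ can differ only by a scaled permutation, which is exactly essential uniqueness in the sense of Definition~\ref{def:identifiability}. First I would use the rank condition to put both factorizations in a common frame. If $X=WH=W'H'$ are two size-$r$ exact NMFs, then since $\rank(X)=r$ all four factors have full rank $r$; hence $W,W'$ share the column space $\col(X)$ and $H,H'$ share the row space. Consequently there is an invertible $Q\in\mathbb{R}^{r\times r}$ with $H'=QH$ and $W'=WQ^{-1}$ (cancelling the full-column-rank $W$ in $WH=W'H'=WQ^{-1}QH$). The whole theorem then reduces to proving that $Q=\Pi D$ for some permutation $\Pi$ and invertible diagonal $D$.

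Next I would convert the nonnegativity of the new factors into cone inclusions and feed in SSC1. From $H'=QH\ge 0$ we get $Q\cone(H)\subseteq\mathbb{R}^r_+$, and since $\cone(H)\supseteq\mathcal{C}$ by SSC1, this gives $Q\mathcal{C}\subseteq\mathbb{R}^r_+$; equivalently, every row of $Q$ lies in the dual cone $\mathcal{C}^*$. Symmetrically, from $W'=WQ^{-1}\ge 0$ we get $Q^{-\top}\cone(W^\top)\subseteq\mathbb{R}^r_+$, and since $W^\top$ is also sufficiently scattered, $Q^{-\top}\mathcal{C}\subseteq\mathbb{R}^r_+$, which dualizes to $\mathcal{C}\subseteq Q^\top\mathbb{R}^r_+=\cone(\text{rows of }Q)$. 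A short computation identifies the dual cone as the circular cone $\mathcal{C}^*=\{g\in\mathbb{R}^r:e^\top g\ge\|g\|_2\}$ (indeed $\mathcal{C}$ itself is the circular cone $\{x:e^\top x\ge\sqrt{r-1}\|x\|_2\}$, the nonnegativity being redundant). Together these two inclusions sandwich the cone generated by the rows of $Q$: $\mathcal{C}\subseteq\cone(\text{rows of }Q)\subseteq\mathcal{C}^*$. This is the step that uses both hypotheses, the SSC on $H$ and on $W^\top$, simultaneously.

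The geometric heart, which I expect to be the main obstacle, is to show this sandwich forces the $r$ rows of $Q$ to be positive multiples of a rotated orthonormal frame. I would slice all three cones with the hyperplane $\{e^\top x=1\}$ (legitimate since $e^\top q_i\ge\|q_i\|_2>0$ for each nonzero row $q_i$). The slice of $\mathcal{C}$ is the insphere of the regular simplex $\Delta^r$ (it is tangent to each facet $\{x_i=0\}$ at $(e-e_i)/(r-1)$), the slice of $\mathcal{C}^*$ is its circumsphere (it passes through the vertices $e_i$), and the slice of $\cone(\text{rows of }Q)$ is a simplex $S$ with vertices $q_i/(e^\top q_i)$. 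Thus $S$ contains the insphere of $\Delta^r$ and is contained in its concentric circumsphere, whose radii satisfy $R=(r-1)\rho$. Euler's inequality for simplices (circumradius $\ge (r-1)\times$ inradius, with equality only for the regular simplex) then forces every inequality to be tight, so $S$ is a regular simplex inscribed in the same sphere, i.e. a rotation of $\Delta^r$ about its centroid $e/r$. Hence $Q=G\mathcal{O}^\top$, where $G$ is a generalized permutation mapping $\mathbb{R}^r_+$ onto itself and $\mathcal{O}$ is orthogonal and fixes $e$.

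Finally I would invoke SSC2 to remove the residual rotation. Substituting $Q=G\mathcal{O}^\top$ into $Q\cone(H)\subseteq\mathbb{R}^r_+$ and using $G\mathbb{R}^r_+=\mathbb{R}^r_+$ yields $\cone(H)\subseteq\mathcal{O}\,\mathbb{R}^r_+=\cone(\mathcal{O})$ with $\mathcal{O}$ orthogonal; SSC2 then forces $\mathcal{O}$ to be a permutation matrix. Therefore $Q=G\mathcal{O}^\top$ is itself a generalized permutation, that is, a scaled permutation, giving the essential uniqueness of $(W,H)$. The two delicate points to get right are the dual-cone identity $\mathcal{C}^*=\{g:e^\top g\ge\|g\|_2\}$ and the extremal (Euler-inequality) characterization of the regular simplex; once these are secured, the remaining manipulations are bookkeeping.
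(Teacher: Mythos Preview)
The paper does not give its own proof of Theorem~\ref{th:uniqNMFSSC}; the result is quoted from \cite{huang2013non} and then used as a black box inside the proof of Theorem~\ref{th:uniqueBSSMF}. There is therefore no in-paper argument to compare your proposal against.

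For what it is worth, your outline reproduces the original argument of Huang, Sidiropoulos and Swami and is correct as written: the rank hypothesis yields the invertible change of basis $Q$; SSC1 on $H$ and on $W^\top$ combine to give the sandwich $\mathcal{C}\subseteq\cone(q_1,\dots,q_r)\subseteq\mathcal{C}^*$ for the rows $q_i$ of $Q$; slicing with $\{e^\top x=1\}$ turns this into an $(r{-}1)$-simplex trapped between two concentric balls whose radii stand in the extremal Euler ratio $R=(r-1)\rho$, forcing the simplex to be regular and hence $Q=G\mathcal{O}^\top$ with $G$ a nonnegative monomial matrix and $\mathcal{O}$ orthogonal fixing $e$; finally SSC2 on $H$ forces $\mathcal{O}$ to be a permutation. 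The two items you single out as delicate---the dual-cone identity $\mathcal{C}^*=\{g:e^\top g\ge\|g\|_2\}$ and the equality case of the simplex Euler inequality---are both correct.
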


In practice, it is not likely for both $W^\top$ and $H$ to satisfy the SSC. Typically $H$ will satisfy the SSC, as it is typically sparse. However, in many applications, $W^\top$ will not satisfy the SSC; in particular in applications where $W$ is not sparse, e.g., in hyperspectral umixing, recommender systems, or imaging. This is why regularized NMF models have been introduced, including sparse and minimum-volume NMF. 
We refer the interested reader to \cite[Chapter 4]{gillis2020} for more details.

\subsection{Bounded simplex-structured matrix factorization (BSSMF)} \label{sec_identifiability_bssmf}

A main motivation to introduce BSSMF is that it is identifiable under weaker conditions than NMF. 
We now state our main identifiability result for BSSMF, it is a consequence of the identifiability result of NMF and the following simple observation: $X = WH$ is a BSSMF for the interval $[a,b]$ implies that 
$be^\top - X =(be^\top - W)H$ and $X - ae^\top =(W - ae^\top) H$ are Exact NMF decompositions. 

\begin{theorem} \label{th:uniqueBSSMF} 
Let ${W \in \mathbb{R}^{m \times r}}$ and ${H \in \mathbb{R}^{r \times n}}$ satisfy ${W(:,k) \in [a,b]}$ for all $k$ for some $a \leq b$,   
${H \geq 0}$, and ${H^\top e = e}$. 
If $\binom{W - a e^\top}{b e^\top - W}^\top \in \mathbb{R}^{r \times 2m}$ and $H \in \mathbb{R}^{r \times n}$ are sufficiently scattered,  
then the BSSMF $(W,H)$ of $X=WH$ of size $r = \rank(X)$ for the interval $[a,b]$ is essentially unique. 
\end{theorem}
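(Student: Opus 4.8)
The plan is to reduce the essential uniqueness of the BSSMF to the essential uniqueness of an associated \emph{Exact NMF}, invoke Theorem~\ref{th:uniqNMFSSC}, and then transfer the conclusion back to $(W,H)$ while showing that the bound constraints kill the scaling ambiguity. First I would package the two bound inequalities into a single nonnegativity structure: set
\[
\tilde W = \binom{W - a e^\top}{b e^\top - W} \in \mathbb{R}^{2m \times r},
\]
which is entrywise nonnegative precisely because $a_i \le W(i,k) \le b_i$ for all $i,k$. Using column-stochasticity of $H$ in the form $e^\top H = e^\top$, I would verify the identity
\[
\tilde X := \binom{X - a e^\top}{b e^\top - X} = \binom{(W - a e^\top)H}{(b e^\top - W)H} = \tilde W H,
\]
so that $(\tilde W, H)$ is an Exact NMF of the nonnegative matrix $\tilde X$. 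The essential observation is that $\tilde X$ depends only on $X$, $a$ and $b$, not on the particular factorization used to produce it.

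Next I would check that Theorem~\ref{th:uniqNMFSSC} applies to $\tilde X = \tilde W H$. By hypothesis both $\tilde W^\top$ and $H$ are sufficiently scattered; since SSC1 (Definition~\ref{def:ssc}) forces the relevant conical hull to contain the full-dimensional cone $\mathcal{C}$, each of $\tilde W$ and $H$ must have rank $r$, whence $\rank(\tilde X) = \rank(\tilde W H) = r$. This is exactly the rank condition required by Theorem~\ref{th:uniqNMFSSC}, which therefore certifies that the Exact NMF $(\tilde W, H)$ of $\tilde X$ is essentially unique. (The SSC hypothesis on $\tilde W^\top$ also rules out the degenerate case $b=a$, in which $\tilde W$ would vanish.)

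Finally I would transfer this back to the BSSMF. Given any competing BSSMF $(W', H')$ of $X$ for $[a,b]$, the pair $(\tilde W', H')$ with $\tilde W' = \binom{W' - a e^\top}{b e^\top - W'} \ge 0$ and $H' \ge 0$, $e^\top H' = e^\top$ is again an Exact NMF of the \emph{same} matrix $\tilde X$. Essential uniqueness then supplies a permutation $\pi$ and nonzero scalars $\alpha_k$ with $\tilde W'(:,k) = \alpha_k \tilde W(:,\pi(k))$ and $H'(k,:) = \alpha_k^{-1} H(\pi(k),:)$. The decisive step is to show $\alpha_k = 1$: reading off the top and bottom blocks of $\tilde W'(:,k) = \alpha_k \tilde W(:,\pi(k))$ gives $W'(:,k) - a = \alpha_k(W(:,\pi(k)) - a)$ and $b - W'(:,k) = \alpha_k(b - W(:,\pi(k)))$, and \emph{adding} these two relations eliminates both $W'$ and $W$ and leaves $(b-a) = \alpha_k (b-a)$. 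Because the interval is non-degenerate (at least one coordinate has $b_i > a_i$), this forces $\alpha_k = 1$, hence $W'(:,k) = W(:,\pi(k))$ and $H'(k,:) = H(\pi(k),:)$. Thus the BSSMF is essentially unique in the sense of Definition~\ref{def:identifiability}; in fact the conclusion is sharper, the bound constraints removing the scaling freedom and leaving only the permutation.

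I expect this last paragraph — pinning $\alpha_k = 1$ by summing the complementary blocks of $\tilde W$ — to be the heart of the argument, as it is the only place the specific hyperrectangle structure (rather than mere nonnegativity) is exploited. Everything preceding it is bookkeeping that recasts the bound constraints as nonnegativity so that the NMF identifiability result can be applied verbatim.
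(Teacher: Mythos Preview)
Your argument is correct and follows the same reduction as the paper: stack the two nonnegative blocks $W-ae^\top$ and $be^\top-W$, observe that any BSSMF of $X$ yields an Exact NMF of the same stacked matrix $\tilde X$, and invoke Theorem~\ref{th:uniqNMFSSC}. The only notable difference is the last step: the paper's proof stops at essential uniqueness of the stacked NMF and concludes by contradiction, deferring the elimination of the scaling ambiguity to a separate lemma (which uses $e^\top H' = e^\top$ and $\rank H = r$ to force $D=I$); your trick of \emph{adding} the two block relations to obtain $(b-a)=\alpha_k(b-a)$ is a neat alternative that exploits the box structure on $W$ rather than the simplex structure on $H$, and it makes the transfer back from $(\tilde W',H')$ to $(W',H')$ fully explicit.
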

\begin{proof}
Let $(W,H)$ be a BSSMF of $X$ for the interval $[a,b]$. 
As in the proof of Lemma~\ref{lem:transfoBSSMF}, we have 
\[
X - a e^\top = WH - a e^\top = (W - a e^\top) H, 
\]
since \revise{$e^\top=e^\top H$}. 
This implies that $(W- a e^\top,H)$ is an Exact NMF of $X - a e^\top$, since $W- a e^\top$ and $H$ are nonnegative. 
Similarly, we have 
\[
b e^\top - X 
= b e^\top - WH 
= (b e^\top - W)H,  
\]
which implies that $(b e^\top - W, H)$ is an Exact NMF of $b e^\top - X$, since  $b e^\top - W \geq 0$. 
Therefore, we have the Exact NMF  
\[
\left( 
\begin{array}{c}
X - a e^\top \\ 
b e^\top - X 
\end{array}
\right) = 
\left( 
\begin{array}{c}
W - a e^\top \\ 
b e^\top - W  
\end{array}
\right) H.  
\] 
By Theorem~\ref{th:uniqNMFSSC}, this Exact NMF is unique if $\binom{W - a e^\top}{b e^\top - W}^\top$ and $H$ satisfy the SSC. 
This proves the result: in fact, the derivations above hold for any BSSMF of $X$. 
Hence, if $(W,H)$ was not an essentially unique BSSMF of $X$, there would exist another Exact NMF of $\binom{W - a e^\top}{b e^\top - W}^\top$, not obtained by permutation and scaling of $
\left( \binom{
W - a e^\top}{
b e^\top - W  
} , H \right)$, 
a contradiction. 
\end{proof}

\revise{Let us note that $W-ae^\top$ and $H$ being SSC, or $be^\top-W$ and $H$ being SSC, are also sufficient conditions for identifiability. These conditions are stronger, as $W-ae^\top$ being SSC or $be^\top-W$ being SSC implies that $\binom{W - a e^\top}{b e^\top - W}^\top$ is SSC. However, $\binom{W - a e^\top}{b e^\top - W}^\top$ does not imply that $W-ae^\top$ or $be^\top-W$ is SSC.} The condition that $\binom{W - a e^\top}{b e^\top - W}^\top$ is SSC is much weaker than requiring $W^\top$ to be SSC in NMF. 
\revise{In fact, in NMF, $W^\top$ is SSC requires that it contains 
zero entries (at least $r-1$ per row \cite[Th.~4.28]{gillis2020}; this can also be seen on the right of \Cref{fig_geoSSC} in the case $r=3$). 
Since the SSC is only defined for nonnegative matrices and $W^\top$ contains zeros, $a$ has to be equal to the zero vector. 
In this case, $W^\top$ being SSC implies that $W^\top - e a^\top$ 
is SSC, and hence the corresponding BSSMF is identifiable. However, the reverse is not true. In fact, $\binom{W - a e^\top}{b e^\top - W}^\top$ being SSC means that sufficiently many values in $W$ are equal to its minimum and maximum bounds in $a$ and $b$.} 
For example, in recommender systems, with $W(i,j) \in [1,5]$ for all $(i,j)$, many entries of $W$ are expected to be equal to 1 or to 5 (the minimum and maximum ratings), so that $\binom{W - a e^\top}{b e^\top - W}^\top$ will \revise{contain many zero entries}, and hence likely to satisfy the SSC\revise{~\cite{fu2018identifiability}}. On the other hand, $W$ is positive, and hence \revise{it cannot be part of} an essentially unique Exact NMF.

Let us illustrate the difference between NMF and BSSMF on a simple example. 
\begin{example}[Non-unique NMF vs.\ unique BSSMF] 
\label{example:1}
Let $\omega \in [0,1)$ and let 
\[
A_{\omega}  
= 
\left( \begin{array}{cccccc}
\omega & 1 & 1 & \omega & 0 & 0 \\ 
1 & \omega & 0 & 0 & \omega & 1 \\ 
0 & 0 & \omega & 1 & 1 & \omega \\ 
\end{array}
\right). 
\]
For $\omega < 0.5$, $A_{\omega}$ satisfies the SSC, while it does not for $\omega \leq 0.5$; see 
\cite[Example~3]{laurberg2008theorems}, 
\cite[Example~2]{huang2013non}, 
\cite[Example~4.16]{gillis2020}. 
Let us take 
\[
 H = 3 A_{1/3}  = 
\left( \begin{array}{cccccc}
1 & 3 & 3 & 1 & 0 & 0 \\ 
3 & 1 & 0 & 0 & 1 & 3 \\ 
0 & 0 & 1 & 3  & 3 & 1 \\ 
\end{array}
\right), 
\] 
which satisfies the SSC, and    
\[
W^\top = 3 A_{2/3} = 
\left( \begin{array}{cccccc} 
  2 &   3 &   3 &   2 &   0 &   0 \\ 
  3 &   2 &   0 &   0 &   2 &   3 \\ 
  0 &   0 &   2 &   3 &   3 &   2 \\ 
\end{array} \right), 
\]
which does not satisfy the SSC, but has some degree of sparsity. 
The NMF of 
\[
X 
= WH 
= \left( \begin{array}{cccccc} 
 11 &   9 &   6 &   2 &   3 &   9 \\ 
  9 &  11 &   9 &   3 &   2 &   6 \\ 
  3 &   9 &  11 &   9 &   6 &   2 \\ 
  2 &   6 &   9 &  11 &   9 &   3 \\ 
  6 &   2 &   3 &   9 &  11 &   9 \\ 
  9 &   3 &   2 &   6 &   9 &  11 \\ 
\end{array} \right)
\]
is not essentially unique. 
For example, 
\[
X = 
\left( \begin{array}{ccc} 
  0 &   3 &   1 \\ 
  1 &   3 &   0 \\ 
  3 &   1 &   0 \\ 
  3 &   0 &   1 \\ 
  1 &   0 &   3 \\ 
  0 &   1 &   3 \\ 
\end{array} \right) 
\left( \begin{array}{cccccc} 
  0 &   2 &   3 &   3 &   2 &   0 \\ 
  3 &   3 &   2 &   0 &   0 &   2 \\ 
  2 &   0 &   0 &   2 &   3 &   3 \\ 
\end{array} \right)
\]
is another decomposition which cannot be obtained as a scaling and permutation of $(W,H)$.  

However, the BSSMF of $X$ is unique, taking $a_i = 0$ and $b_i = 3$ for all $i$. In fact, $(3-W)^\top$ 
satisfies the SSC, as it is equal to  $3A_{1/3}$, up to permutation of its columns: 
\[
\begin{split}
3 - W^\top  
&= 
\left( \begin{array}{cccccc} 
  1 &   0 &   0 &   1 &   3 &   3 \\ 
  0 &   1 &   3 &   3 &   1 &   0 \\ 
  3 &   3 &   1 &   0 &   0 &   1 \\ 
\end{array} \right)\\
&= 
3 A_{1/3}(:, [4, 5, 6, 1, 2, 3] ).  
\end{split}
\]
Therefore, by Theorem~\ref{th:uniqueBSSMF}, the BSSMF of $X$ is unique.  
\end{example}


\paragraph{Scaling ambiguity}
BSSMF is in fact more than essentially unique in the sense of~\Cref{def:identifiability}. 
In fact, the scaling ambiguity can be removed because of $H$ being simplex structured, as shown in the following lemma. 
\begin{lemma}
	Let $H\in\mathbb{R}^{r\times n}$ such that $e^\top H=e^\top$ and $\rank(H)=r$. 
	Let $D \in \mathbb{R}^{r \times r}$ be a diagonal matrix, 
	and let $H'=DH$ be a scaling of the rows of $H$, and such that $e^\top H'=e^\top$. 
	Then $D$ must be the identity matrix, that is, $D=I$. 
\end{lemma}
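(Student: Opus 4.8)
The plan is to use both sum-to-one constraints at once and reduce the statement to the triviality of the left null space of $H$, which is guaranteed by the rank hypothesis. The key observation is that ``left-multiplication by $e^\top$'' of a rescaled matrix $DH$ picks out precisely the diagonal entries of $D$, so the two normalization conditions become two linear identities that differ only by the perturbation $d-e$.

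Concretely, I would first collect the diagonal entries of $D$ into the vector $d=(d_1,\dots,d_r)^\top\in\mathbb{R}^r$, so that $e^\top D = d^\top$, where here $e$ is the all-ones vector in $\mathbb{R}^r$. The assumption $e^\top H' = e^\top$ then reads $e^\top D H = d^\top H = e^\top$, while the hypothesis on $H$ gives $e^\top H = e^\top$ (with $e\in\mathbb{R}^r$ on the left and $e\in\mathbb{R}^n$ on the right). Subtracting these two identities yields $(d-e)^\top H = 0$, equivalently $H^\top(d-e)=0$. Since $\rank(H)=r$, the matrix $H^\top\in\mathbb{R}^{n\times r}$ has full column rank, so its null space is trivial; hence $d-e=0$, i.e.\ $d_k=1$ for all $k$, which is exactly $D=I$.

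There is no genuine obstacle here, as the whole argument is a one-line linear-algebra computation once the constraints are written out. The only two points I would state carefully are, first, the dimensional bookkeeping for $e$ (its length is $r$ when it multiplies $H$ on the left but $n$ when it appears as the target row vector), since conflating the two would hide why the subtraction is meaningful; and second, that the rank assumption is used in exactly one place, to force the left null space of $H$ to be trivial. I would emphasize that this hypothesis is necessary: without full row rank, the affine set $\{d : d^\top H = e^\top\}$ would contain vectors other than $e$, yielding admissible rescalings $D\neq I$ that preserve the column-stochasticity, so the simplex structure alone removes the scaling ambiguity only when $H$ has rank $r$.
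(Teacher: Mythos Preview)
Your proof is correct and follows essentially the same approach as the paper: both arguments reduce to showing that $e^\top D H = e^\top H$ forces $e^\top D = e^\top$ by the full row rank of $H$. The only cosmetic difference is that the paper cancels $H$ by right-multiplying with the right inverse $H^\dagger$ (and checks $e^\top H^\dagger = e^\top$), whereas you subtract the two identities and invoke the triviality of the left null space of $H$; these are two phrasings of the same linear-algebra fact.
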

\begin{proof} 
Let us denote $H^\dagger \in \mathbb{R}^{n\times r}$ the right inverse of $H$, which exists and is unique since $\rank(H) = r$, so that $H H^\dagger = I$. We have 
\begin{equation*}
\begin{split}
& e^\top H'= e^\top DH = e^\top \\
\Rightarrow \quad & e^\top DHH^\dagger=e^\top H^\dagger = e^\top \\
& \quad \quad \text{ since } e^\top H^\dagger=e^\top HH^\dagger=e^\top \\
\Rightarrow \quad & e^\top D = e^\top 
\quad \Rightarrow \quad D = I. 
\end{split}
\end{equation*}
Note that this lemma does not require $H$, $H'$ and $D$ to be nonnegative. 
\end{proof}

\paragraph{Geometric interpretation of BSSMF}

Solving BSSMF is equivalent to finding a polytope with $r$ vertices 
within the \revise{hyperrectangle} defined by \revise{$[a,b]$} 
 that reconstructs as well as possible the data points. The fact that BSSMF is constrained within a \revise{hyperrectangle} makes BSSMF more constrained \revise{than NMF}, and hence more likely to be  essentially unique. 
 This will be illustrated empirically in \Cref{sec_ident}. 
 Let us provide a toy example to better understand the distinction between NMF and BSSMF, namely let us use Example~\ref{example:1} with $W=\frac{3}{10}A_{2/3}$ and $H=\frac{2}{3}A_{1/2}$ so that $X=WH$ is column stochastic. 
 \Cref{fig:geobssmf} represents \revise{the feasible regions of} NMF and BSSMF 
 \revise{for the hypercube $[a,b] = 
 [0,\frac{3}{10}]^3$} in a two-dimensional space within the affine hull of $W$;  see~\cite{gillis2020} for the details on how to construct such a representation.  
 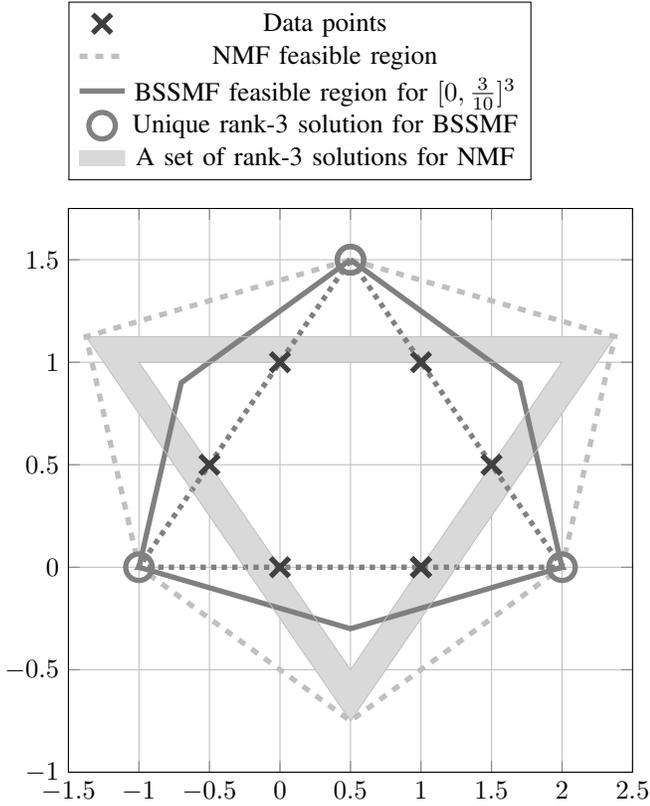
\begin{figure}[htb!]
%
%
\begin{tikzpicture}

\begin{axis}[%
width=7.5cm,
height=7.5cm,
at={(0cm,0cm)},
scale only axis,
xmin=-1.5,
xmax=2.5,
ymin=-1,
ymax=1.75,
xmajorgrids,
ymajorgrids,
legend style={at={(0,1.05)},anchor=south west}
]
\addplot [color=darkgray,line width=2.0pt,only marks,mark size=5pt, mark=x]
  table[row sep=crcr]{%
1	0\\
1.5	0.5\\
1	1\\
0	1\\
-0.5	0.5\\
0   0\\
1	0\\
};
\addlegendentry{Data points}

\addplot [color=lightgray, dashed, line width=2.0pt, mark size=7pt]
  table[row sep=crcr]{%
0.5 	1.5\\
2.375	1.125\\
2	0\\
0.5	    -0.75\\
-1	    0\\
-1.375	1.125\\
0.5 	1.5\\
};
\addlegendentry{NMF feasible region}

\addplot [color=gray, line width=2.0pt, mark size=7.0pt]
  table[row sep=crcr]{%
0.5 	1.5\\
1.7	0.9\\
2	0\\
0.5	    -0.3\\
-1	    0\\
-0.7	0.9\\
0.5 	1.5\\
};
\addlegendentry{BSSMF feasible region for $[0,\frac{3}{10}]^3$}

\addplot [color=gray, dotted, line width=2.0pt, forget plot]
  table[row sep=crcr]{%
0.5 	1.5\\
2	0\\
-1	    0\\
0.5 	1.5\\
};
\addplot [only marks, color=gray, line width=2.0pt, mark size=5.0pt, mark=o]
  table[row sep=crcr]{%
0.5 	1.5\\
2	0\\
-1	    0\\
0.5 	1.5\\
};
\addlegendentry{Unique rank-3 solution for BSSMF}

\addplot [name path=A1, color=lightgray, line width=0pt, forget plot]
  table[row sep=crcr]{%
2.375	1.125\\
0.5	    -0.75\\
-1.375	1.125\\
2.375	1.125\\
};
\addplot [name path=A2, color=lightgray, line width=0pt, forget plot]
  table[row sep=crcr]{%
2   1\\
0.5 -0.5\\
-1  1\\
2   1\\
};
\addplot[gray!30] fill between[of=A1 and A2];
\addlegendentry{A set of rank-3 solutions for NMF}
\end{axis}
\end{tikzpicture}
    \caption{Geometric interpretation of BSSMF~\revise{for Example~\ref{example:1}}. Any triangle in the gray filled area containing the data points is a rank-3 solution for NMF. On the contrary, there is a unique rank-3 solution for BSSMF since there is a unique triangle containing the data points in the BSSMF feasible set.}
    \label{fig:geobssmf}
\end{figure} 
 For this rank-3 factorization problem, solving NMF and BSSMF is equivalent to finding a triangle nested between the convex hull of the data points and the corresponding feasible region. 
 BSSMF has a unique solution, that is, there is a unique triangle between the data points and the BSSMF feasible region. On the other hand,  NMF is not identifiable: for example, any triangle within  the gray area containing the data points is a solution. 

In summary, for the BSSMF of $X = WH$ to be essentially unique, $W$ must contain sufficiently many entries equal to the lower and upper bounds, while $H$ must be sufficiently sparse. 

\paragraph{\revise{Choice of $a$ and $b$}} \label{para:choiceab} In practice, if $a$ and $b$ are unknown, it may be beneficial to choose them such that as many entries of $X$  are equal to the lower and upper bounds, \revise{and hence BSSMF is more likely to be identifiable. 
Let us denote $\tilde{a}_i = \min_j X(i,j)$ and $\tilde{b}_i = \max_j X(i,j)$ for all $i$, 
and let $X = WH$ be a BSSMF for the hyperrectangle $[a,b]$. 
We have $\tilde{a} \geq a$ and $\tilde{b} \leq b$ 
since $H(:,j) \in \Delta^r$ for all $j$. Hence, without any prior information, 
it makes sense to use a BSSMF with interval $[\tilde{a},\tilde{b}]$ which is contained in $[a,b]$.} 


\begin{remark}
    Interestingly, \revise{as shown in Lemma~\ref{lem:transfoBSSMF} below}, in the exact case, that is, when $X = WH$, we can assume w.l.o.g.\ that $[a_i, b_i] = [0,1]$ for all $i$ in BSSMF.   
\begin{lemma} \label{lem:transfoBSSMF} 
Let $a \in \mathbb{R}^m$ and $b \in \mathbb{R}^m$ 
be such that $a_i < b_i$ for all $i$. 
The matrix $X = WH$ admits a BSSMF for the interval 
$[a, b]$ if and only if the matrix 
$\frac{[X - a e^\top]}{[(b-a) e^\top]}$ admits a BSSMF for the interval $[0, 1]^m$, 
where $\frac{[\cdot]}{[\cdot]}$ is the component-wise division of two matrices of the same size. 
\end{lemma}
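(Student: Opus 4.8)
The plan is to prove the equivalence by exhibiting an explicit affine change of variables that acts only on the factor $W$, leaving $H$ untouched, and exploiting the column-stochasticity of $H$ in exactly the same way as in the proof of Theorem~\ref{th:uniqueBSSMF}. The one algebraic fact I will use repeatedly is that the columns of $H$ sum to one, equivalently $e^\top H = e^\top$, so any constant-column matrix passes through the product unchanged: $(a e^\top) H = a e^\top$, with the length of $e$ adjusting from $r$ to $n$. Since $a_i < b_i$ for all $i$, the diagonal matrix $D := \diag(b-a) \in \mathbb{R}^{m \times m}$ is invertible, which is what makes the change of variables reversible and hence lets me treat both implications symmetrically.

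For the forward direction, I would start from a BSSMF $(W,H)$ of $X$ for $[a,b]$ and set $W' := D^{-1}(W - a e^\top)$ (with $e \in \mathbb{R}^r$), which is precisely the column-wise normalization $\frac{[W - a e^\top]}{[(b-a)e^\top]}$. Because each column of $W$ lies in $[a,b]$, a direct entrywise check gives $W'(:,k) \in [0,1]^m$. Keeping $H$ fixed, I would then compute
\[
W' H = D^{-1}(W - a e^\top) H = D^{-1}\bigl(WH - a e^\top H\bigr) = D^{-1}(X - a e^\top),
\]
where the middle equality uses $e^\top H = e^\top$ and the last uses $WH = X$. The right-hand side is exactly $\frac{[X - a e^\top]}{[(b-a)e^\top]}$, so $(W',H)$ is a BSSMF of the normalized matrix for $[0,1]^m$.

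For the converse, I would start from a BSSMF $(W',H)$ of $\frac{[X - a e^\top]}{[(b-a)e^\top]}$ for $[0,1]^m$ and invert the map by setting $W := D W' + a e^\top$. Since $W'(:,k) \in [0,1]^m$, each column satisfies $W(:,k) = D\,W'(:,k) + a \in [a,b]$. Running the same one-line computation backwards, again invoking $e^\top H = e^\top$, yields
\[
WH = D\,(W'H) + a e^\top = (X - a e^\top) + a e^\top = X,
\]
so $(W,H)$ is a BSSMF of $X$ for $[a,b]$, closing the equivalence.

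There is no genuine obstacle in this argument; it is a routine change of variables. The only points that require care are the bookkeeping of the all-ones vector $e$, whose length is $r$ in expressions such as $a e^\top \in \mathbb{R}^{m \times r}$ but $n$ in $a e^\top \in \mathbb{R}^{m \times n}$, and the use of the strict hypothesis $a_i < b_i$ (rather than merely $a_i \le b_i$) to guarantee that $D$ is invertible, so that both directions of the map are well defined.
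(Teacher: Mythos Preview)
Your proof is correct and follows essentially the same approach as the paper: both arguments apply an affine change of variables to $W$ alone and invoke the column-stochasticity identity $e^\top H = e^\top$ to push constant-column matrices through the product. The only cosmetic differences are that you package the componentwise division as left multiplication by $D^{-1} = \diag(b-a)^{-1}$ and write out both implications explicitly, whereas the paper performs the translation and rescaling in two separate steps, verifies the rescaling entrywise, and leaves the converse to symmetry.
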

\begin{proof} 
Let us show the direction $\Rightarrow$, the other is obtained exactly in the same way. 
Let the matrix  $X = WH$ admit a BSSMF for the interval 
$[a, b]$. We have 
\[
X - ae^\top 
= WH - a e^\top 
= (W - a e^\top) H, 
\]
since $e^\top H = e^\top$, as $H$ is column stochastic. This shows that $X' = X - ae^\top$ admits a BSSMF for the interval $[0,b-a]$ since $W' = (W - a e^\top) \in [0,b-a]$. For simplicity, let us denote $c = b-a > 0$. We have $X' = W' H$, while 
\[
\frac{[X-a e^\top]}{[(b-a) e^\top]} 
= 
\frac{[X']}{[c e^\top]} 
= 
\frac{[W' H]}{[c e^\top]} 
= 
\frac{[W']}{[c e^\top]}  H , 
\]
because $H$ is column stochastic. In fact, for all $i,j$, 
\[
\begin{split}
\frac{[W' H]_{i,j}}{[c e^\top]_{i,j}} 
& = 
 \frac{\sum_{k} W'(k,i) H(k,j)]_{i,j}}{c_i} \\
& = \sum_{k} \frac{W'(k,i)}{c_i} H(k,j) \\
& = \left( \frac{[W']}{[c e^\top]} H  \right)_{i,j}.     
\end{split}
\] 
Hence $\frac{[X-a e^\top]}{[(b-a) e^\top]}$ admits a BSSMF for the interval $[0,1]^m$ since $H$ is column stochastic, 
and all columns of 
${\frac{[W']}{[c e^\top]} = \frac{[W - a e^\top]}{[(b-a) e^\top]}}$ belong to $[0,1]^m$. 
\end{proof}
\end{remark}

\begin{remark}[What if $a_i=b_i$ for some $i$?]  
Lemma~\ref{lem:transfoBSSMF} does not cover the case $a_i=b_i$ for some $i$. In that case, we have $W(i,:) = a_i = b_i$ and therefore 
${X(i,:) = W(i,:) H = a_i e^\top = b_i e^\top}$. 
This is not an interesting situation, and rows of $X$ with identical entries can be removed. In fact, after the transformation $X - ae^\top$, these rows are identically zero. 
\end{remark}

Lemma~\ref{lem:transfoBSSMF} highlights another interesting property of BSSMF: as opposed to NMF, it is invariant to translations of the entries of the input matrix, given that $a$ and $b$ are translated accordingly. 
For example, in recommender systems \revise{datasets} such as Netflix and MovieLens, $X(i,j) \in \{1,2,3,4,5\}$ for all $i,j$. 
Changing the scale, say to $\{0,1,2,3,4\}$, 
does not change the interpretation of the data, 
but will typically impact the NMF solution significantly\footnote{In fact, for NMF, it would make more sense to work on the \revise{datasets} translated to $[0,4]$, 
as it would potentially allow it to be identifiable: zeros in $X$ imply zeros in $W$ and $H$, which are therefore more likely to satisfy the SSC.}, 
while the BSSMF solution will be unchanged, if the interval is translated from $[1,5]$ to $[0,4]$ since $H$ is invariant by translation on $X$. This property is in fact coming from SSMF.

\section{Numerical experiments} \label{sec_numexp}

The goal of this section is to  highlight the motivation points mentioned in \Cref{sec_motiv} on real data sets. All experiments are run on a PC with an Intel(R) Core(TM) i7-9750H CPU @ 2.60GHz and 16GiB RAM. Let us recall that in order to retrieve NMF from \cref{alg:BSSMF}, the bounds need to be set to $(a,b)=(0,+\infty)$ and the projection step on the probability simplex in line~\ref{alg:BSSMF:line:proj} should \ovt{be replaced by a projection on the nonnegative orthant}. Hence, in our experiments, both NMF and BSSMF are solved with the same code implementation. 
\subsection{Interpretability} \label{sec_inter}

When applied on a pixel-by-image matrix, NMF allows to automatically extract common features among a set of images. For example, if each row of $X$ is a vectorized facial image, the rows of $W$ will correspond to facial features~\cite{lee1999learning}. 

Let us compare NMF with BSSMF on the widely \revise{used} MNIST handwritten digits dataset ($60,000$ images, $28\times28$ pixels) \cite{lecun1998gradient}. Each column of $X$ is a vectorized handwritten digit. For BSSMF to make more sense, we preprocess $X$ so that the intensities of the pixels in each digit belong to the interval $[0,1]$ (first remove from $X(:,j)$ its minimum entry, then divide by the maximum entry minus the minimum entry).

Let us take a toy example with $n=500$ randomly selected digits and $r=10$, in order to visualize the natural interpretability of BSSMF. \revise{The choice of $n$ is made solely for computational time considerations. For larger $n$, \Cref{fig:Wb} might change but we will not lose interpretability.}  
\begin{figure}[htb!]
    \begin{subfigure}{0.49\textwidth}
        \includegraphics[width=\textwidth]{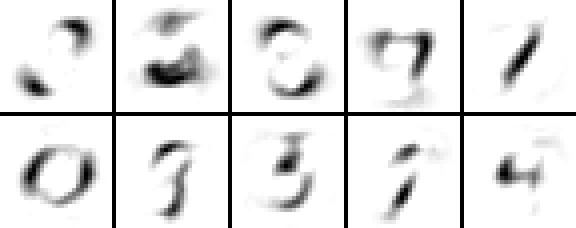}
        \caption{NMF}
        \label{fig:Wf}
    \end{subfigure}
    \begin{subfigure}{0.49\textwidth}
        \includegraphics[width=\textwidth]{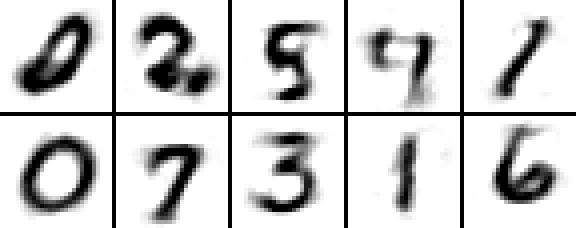}
        \caption{BSSMF}
        \label{fig:Wb}
    \end{subfigure}
    \caption{\revise{Reshaped columns of the} basis matrix $W$ for $r=10$ for MNIST with 500 digits.}
    \label{fig:Ws}
\end{figure} 
\Cref{fig:Wf} shows the features learned by NMF  which look like parts of digits. On the other hand, the features learned by BSSMF in \Cref{fig:Wb} look mostly like real digits, because of the bound constraint and the \revise{simplex structure}. 
%
\revise{In fact, as it is well known~\cite{lee1999learning} that NMF learns part-based representations, in this case, parts of digits. 
In other words, the columns of $W$ in NMF identify subset of pixels that are activated simultaneously in as many images as possible. 
Now, by the scaling degree of freedom, assume w.l.o.g.\ that $W(:,j) \in [0,1]^m$ for all $j$ in NMF. Since the columns of $W$ are parts of digits, each digits will have to use several of these parts, with an intensity close to one, so that $H$ will be far from being column stochastic. 
BSSMF, with the simplex constraint on $H$ and the bound constraints on $W$, therefore cannot  learn such a part-based representation. 
This is the reason why BSSMF learns more global features that, added on top of each other, reconstruct the digits. As it is shown in the MNIST experiment, these features look like digits themselves. 
Interestingly, if we progressively increase the upper bound, we would see that BSSMF progressively learns parts of digits, like NMF (using a lower bound of zero, that is, BSSMF with $[0,u]^m$ with $u \geq 1$). 
This is an indirect way of balancing the sparsity between $W$ and $H$. The larger the upper bound, the more relaxed is BSSMF and hence the sparser $W$ will be (given that the lower bound is 0).}   
%
\revise{In~\Cref{fig:Wb}, w}e distinguish numbers (like 7, 3 and 6). From a clustering point of view, this is of much interest because a column of $H$ which is near a ray of the probability simplex can directly be associated with the corresponding digit from~$W$. In this toy example, due to $r$ being small, 
an 8 cannot be seen. Nonetheless, an eight can be reconstructed as the weighted sum of the representations of a 5, a 3 and an italic 1; see \Cref{fig:eight} for an example.   
\begin{figure}[htb!]
    \centering
    \scalebox{0.68}{\input{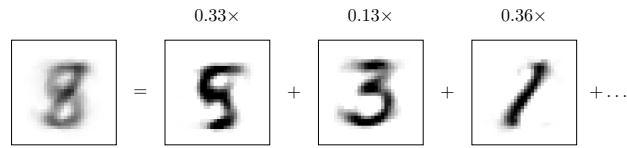}}
    \caption{Decomposition of an eight by BSSMF with $r=10$.}
    \label{fig:eight}
\end{figure} 
Note that since BSSMF is more constrained than NMF, its reconstruction error might be larger than that of NMF. 
For our example ($r=10$), BSSMF has  relative error $\|X-WH\|_F/\|X\|_F$ of $61.56\%$, and  NMF of $59.04\%$. \revise{This is not always a drawback. In some applications, due to the presence of noise, although the reconstruction error of BSSMF is larger than that of NMF, the accuracy of the estimated factors $W$ and $H$ could be better, because it uses the prior information and is less prone to overfitting and less sensitive to outliers. 
See also the discussion in \Cref{sec_robust}   
 where NMF has a lower RMSE than BSSMF on the training set, but a larger RMSE than BSSMF on the test set.} 
\ngi{Note that we also compute NMFs using \cref{alg:BSSMF} where the projections are performed on the nonnegative orthant, instead of on the bounded set for $W$ and on the probability simplex for~$H$.} \ovt{The stopping criteria in line~\ref{alg:BSSMF:line:outerloop}, \ref{alg:BSSMF:line:Winnerloop} and \ref{alg:BSSMF:line:Hinnerloop} of \cref{alg:BSSMF} are a maximum number of iterations equal to 500, 20 and 20, respectively, for both algorithms.}

\subsection{Identifiability} \label{sec_ident}
As it is NP-hard to check the SSC~\cite{huang2013non}, we perform experiment\revise{s} \revise{on MNIST and synthetic data} where only a necessary condition for SSC1 is verified, 
namely \cite[Alg.~4.2]{gillis2020}. 

\paragraph{MNIST dataset} \revise{On MNIST, t}o see when $H$ satisfies this condition, we first vary $n$ from $100$ to $300$ for $m$ fixed (=$28$$\times$$28$). For $W^\top$, we fix $n$ to $300$, and downscale the resolution $m$ from $28$$\times$$28$ to $12$$\times$$12$ with a linear interpolation (\texttt{imresize3} in MATLAB), and the rank $r$ is varied from $12$ to $30$. 
Recall that both factors need to satisfy the SSC to correspond to an essentially unique factorization. In \Cref{fig:ssc_nmf}, we see that $W^\top$ of NMF often satisfies the necessary condition. 
This is due to NMF learning ``parts'' of objects~\cite{lee1999learning}, which are sparse by nature, and sparse matrices are more likely to satisfy \revise{the SSC} (\cref{def:ssc}).  
On the contrary, even for a relatively large $n$, $H$ is too dense to satisfy the necessary condition. For $r\geq30$, the factor $H$ generated by NMF never satisfies the condition.
Meanwhile, in \Cref{fig:ssc_bssmf} we see that $H$ of BSSMF always satisfies the condition when $n \geq225$ for $r=30$ and more generally, if $n$ and $m$ are large enough, both $H$ and $\binom{W}{\revise{J} - W}^\top$ satisfy the necessary condition. This substantiates that BSSMF provides  essentially unique factorizations more often than NMF does. 

\begin{figure}[htb!]
  \centering
  \begin{subfigure}{0.51\textwidth}
      \def\step{0.28}
\begin{tikzpicture}
\coordinate (grid_start) at (-5*\step , -4*\step);
\coordinate (grid_end) at (4*\step , 5*\step);
\node (titleH) [yshift=5*\step cm+0.5cm] {$H$};
\foreach \x in {1, ..., 5}
  \draw (-5*\step + 2*\x*\step -1.5*\step ,-4*\step) -- (-5*\step + 2*\x*\step-1.5*\step ,-4*\step cm -1pt) node[anchor=north] {$\pgfmathparse{10+(\x-1)*10}\pgfmathprintnumber{\pgfmathresult}$};
\node (xaxis) [yshift=-5*\step cm-0.3 cm] {rank $r$};
\foreach \x in {1, ..., 5}
  \draw (-5*\step , -4*\step + 2*\x*\step -1.5*\step) -- (-5*\step cm -1pt, -4*\step + 2*\x*\step-1.5*\step) node[anchor=east] {$\pgfmathparse{300-(\x-1)*50}\pgfmathprintnumber{\pgfmathresult}$};
\node (yaxis) [xshift=-6*\step cm-0.7 cm,rotate=90] {number of samples $n$};

\foreach \y [count=\n] in {
    {100,80,50,10,0,0,0,0,0},
    {100,100,70,0,0,0,0,0,0},
    {100,100,70,50,10,0,0,0,0},
    {100,100,90,50,0,0,0,0,0},
    {100,100,90,40,0,0,0,0,0},
    {100,100,90,30,10,0,0,0,0},
    {100,100,90,20,0,0,0,0,0},
    {100,100,100,20,10,0,0,0,0},
    {100,100,100,20,0,0,0,0,0},
} {
  \foreach \x [count=\m] in \y {
    \fill [white!\x!black] (-6*\step + \m*\step,6*\step - \n*\step) rectangle (4*\step cm,-4*\step cm);
  }
}
\draw[step=\step,gray,very thin] (grid_start) grid (grid_end);
\end{tikzpicture}
\hspace{-0.3cm}
\begin{tikzpicture}
\coordinate (grid_start) at (-5*\step , -4*\step);
\coordinate (grid_end) at (4*\step , 5*\step);
\node (titleW) [yshift=5*\step cm+0.5cm] {$W^\top$};
\foreach \x in {1, ..., 5}
  \draw (-5*\step + 2*\x*\step -1.5*\step ,-4*\step) -- (-5*\step + 2*\x*\step-1.5*\step ,-4*\step cm -1pt) node[anchor=north] {$\pgfmathparse{14+(\x-1)*4}\pgfmathprintnumber{\pgfmathresult}$};
\node (xaxis) [yshift=-5*\step cm-0.3 cm] {rank $r$};
\foreach \x in {1, ..., 5}
  \draw (-5*\step , -4*\step + 2*\x*\step -1.5*\step) -- (-5*\step cm -1pt, -4*\step + 2*\x*\step-1.5*\step) node[anchor=east] {$\pgfmathparse{28-(\x-1)*4}\pgfmathprintnumber{\pgfmathresult}$};
\node (yaxis) [xshift=-6*\step cm-0.5 cm,rotate=90] {resolution $\sqrt{m}$};

\foreach \y [count=\n] in {
    {80,80,60,20,10,10,0,0,0},
    {100,60,70,40,20,10,0,0,0},
    {100,100,90,80,80,60,40,10,10},
    {100,100,80,80,100,90,50,30,20},
    {100,100,90,90,100,90,70,70,40},
    {100,100,100,100,100,100,90,70,90},
    {100,100,100,100,100,100,90,90,100},
    {100,100,100,100,100,100,90,100,90},
    {100,100,100,100,100,100,100,100,100},
} {
  \foreach \x [count=\m] in \y {
    \fill [white!\x!black] (-6*\step + \m*\step,6*\step - \n*\step) rectangle (4*\step cm,-4*\step cm);
  }
}
\draw[step=\step,gray,very thin] (grid_start) grid (grid_end);
\node (colormap) [yshift = 0.5*\step cm, xshift = 5*\step cm + 0.3 cm] {\pgfplotscolorbardrawstandalone[ 
    colormap={blackwhite}{color=(black) color=(white)},
    colorbar,
    point meta min=0,
    point meta max=1,
    colorbar style={
        width = \step cm,
        height=9*\step cm,
        ytick={0,0.2,0.4,0.6,0.8,1}}]};   
\end{tikzpicture}
      \caption{NMF}
      \label{fig:ssc_nmf}
  \end{subfigure}
  \hspace{-0.8cm}
  \begin{subfigure}{0.52\textwidth}
      \def\step{0.28}
\begin{tikzpicture}
\coordinate (grid_start) at (-5*\step , -4*\step);
\coordinate (grid_end) at (4*\step , 5*\step);
\node (titleH) [yshift=5*\step cm+0.5cm] {$H$};
\foreach \x in {1, ..., 5}
  \draw (-5*\step + 2*\x*\step -1.5*\step ,-4*\step) -- (-5*\step + 2*\x*\step-1.5*\step ,-4*\step cm -1pt) node[anchor=north] {$\pgfmathparse{10+(\x-1)*10}\pgfmathprintnumber{\pgfmathresult}$};
\node (xaxis) [yshift=-5*\step cm-0.3 cm] {rank $r$};
\foreach \x in {1, ..., 5}
  \draw (-5*\step , -4*\step + 2*\x*\step -1.5*\step) -- (-5*\step cm -1pt, -4*\step + 2*\x*\step-1.5*\step) node[anchor=east] {$\pgfmathparse{300-(\x-1)*50}\pgfmathprintnumber{\pgfmathresult}$};
\node (yaxis) [xshift=-6*\step cm-0.7 cm,rotate=90] {number of samples $n$};

\foreach \y [count=\n] in {
    {90,100,90,50,10,0,0,0,0},
    {100,90,80,70,30,0,0,0,0},
    {100,100,90,90,80,30,10,0,0},
    {100,100,100,90,90,50,10,0,0},
    {100,100,100,100,90,70,40,20,0},
    {100,100,100,100,100,80,80,50,0},
    {100,100,100,100,100,100,90,50,30},
    {100,100,100,100,100,90,80,80,60},
    {100,100,100,100,100,100,100,80,70},
} {
  \foreach \x [count=\m] in \y {
    \fill [white!\x!black] (-6*\step + \m*\step,6*\step - \n*\step) rectangle (4*\step cm,-4*\step cm);
  }
}
\draw[step=\step,gray,very thin] (grid_start) grid (grid_end);
\end{tikzpicture}
\hspace{-0.3cm}
\begin{tikzpicture}
\coordinate (grid_start) at (-5*\step , -4*\step);
\coordinate (grid_end) at (4*\step , 5*\step);
\node (titleW) [yshift=5*\step cm+0.5cm] {$\binom{W}{ee^\top - W}^\top$};
\foreach \x in {1, ..., 5}
  \draw (-5*\step + 2*\x*\step -1.5*\step ,-4*\step) -- (-5*\step + 2*\x*\step-1.5*\step ,-4*\step cm -1pt) node[anchor=north] {$\pgfmathparse{14+(\x-1)*4}\pgfmathprintnumber{\pgfmathresult}$};
\node (xaxis) [yshift=-5*\step cm-0.3 cm] {rank $r$};
\foreach \x in {1, ..., 5}
  \draw (-5*\step , -4*\step + 2*\x*\step -1.5*\step) -- (-5*\step cm -1pt, -4*\step + 2*\x*\step-1.5*\step) node[anchor=east] {$\pgfmathparse{28-(\x-1)*4}\pgfmathprintnumber{\pgfmathresult}$};
\node (yaxis) [xshift=-6*\step cm-0.5 cm,rotate=90] {resolution $\sqrt{m}$};

\foreach \y [count=\n] in {
    {70,20,0,0,0,0,0,0,0},
    {70,30,10,0,0,0,0,0,0},
    {100,80,70,20,20,0,0,0,0},
    {80,100,80,50,0,0,0,0,0},
    {70,100,90,50,70,30,0,0,0},
    {80,90,100,100,100,40,40,10,0},
    {90,100,100,100,90,70,40,30,0},
    {100,80,100,100,100,70,80,30,30},
    {100,90,100,100,100,100,100,100,80},
} {
  \foreach \x [count=\m] in \y {
    \fill [white!\x!black] (-6*\step + \m*\step,6*\step - \n*\step) rectangle (4*\step cm,-4*\step cm);
  }
}
\draw[step=\step,gray,very thin] (grid_start) grid (grid_end);
\node (colormap) [yshift = 0.5*\step cm, xshift = 5*\step cm + 0.3 cm] {\pgfplotscolorbardrawstandalone[ 
    colormap={blackwhite}{color=(black) color=(white)},
    colorbar,
    point meta min=0,
    point meta max=1,
    colorbar style={
        width = \step cm,
        height=9*\step cm,
        ytick={0,0.2,0.4,0.6,0.8,1}}]};   
\end{tikzpicture}
      \caption{BSSMF}
      \label{fig:ssc_bssmf}
  \end{subfigure}
  \caption{Ratio, over 10 runs, of the factors generated by NMF in \Cref{fig:ssc_nmf} and by BSSMF in \Cref{fig:ssc_bssmf} that satisfy the necessary condition for SSC1  (white squares indicate that all matrices meet the necessary condition, black squares that none do). }
  \label{fig:ssc_nec_cond}
\end{figure}
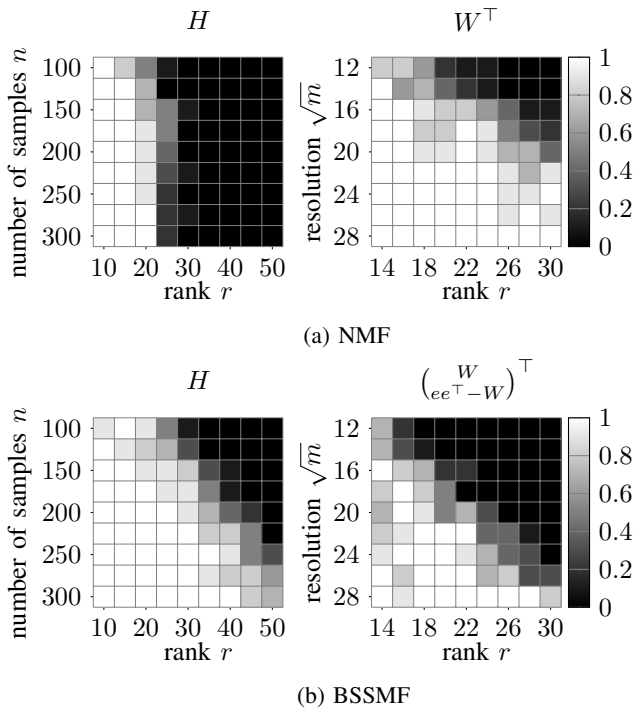

\revise{\paragraph{Synthetic datasets} 
Let us now perform an experiment to show how BSSMF is more likely than NMF to recover factors closer to the true ones, even when the sufficient conditions for identifiability are not satisfied. 
As there is no groundtruth for NMF and BSSMF on MNIST, we generate synthetic data as follows. Our synthetic datasets are of size $100\times100$, and their factorization rank is $10$. 
The matrix $H$ is generated randomly with values uniformly distributed between zero and one, and we randomly set $30\%$ of the values to zero. This allows us to ensure that $H$ satisfies the SSC. The reason behind ensuring that $H$ is SSC is that both 
NMF (\Cref{th:uniqNMFSSC}) and BSSMF (\Cref{th:uniqueBSSMF}) require that $H$ satisfies the SSC\footnote{In this experiment, because $n$ and $r$ are smaller, we could checked that the SSC is satisfied (not a necessary condition), using Gurobi (\url{https://www.gurobi.com/}), a global optimization software.}. 
As we want to emphasize on how likely it is to retrieve the true factors for NMF and BSSMF, we make sure that their common conditions for identifiability are satisfied. 
The matrix $W$ is also generated randomly with values uniformly distributed between zero and one, and we then set a percentage of $p_{0,1}$ of the entries to zero and one, with the same probability to be equal to zero or one. Hence, $p_{0,1}$ percent of the values in $W$ touches the lower and upper bounds in BSSMF. 
Finally, we let $X=WH$ to get our synthetic data. We solve NMF and BSSMF on $X$ using \cref{alg:BSSMF}. To assess the quality of the solutions, we report the average of the mean removed spectral angle (MRSA) between the columns of the true $W$ and the estimated $W$ (after an optimal permutation of the columns), as this is standard in the NMF literature. Given any two vectors $a$ and $b$, their MRSA is defined as 
\begin{equation*}
    \text{MRSA}(a,b) = \frac{100}{\pi} \text{arcos} 
    \left(\frac{ (a - \overline{a}e)^\top (b - \overline{b}e) }{\Vert a - \overline{a}e \Vert_{2}\Vert b - \overline{b} e\Vert_{2}}\right) \in \left[  0, 100\right], 
\end{equation*}
where $\overline{\cdot}$~is the average of the entries of a vector. 

\begin{figure}[htb!]
    \centering
    \pgfplotsset{boxplot circle legend/.style={
    legend image code/.code={
        \draw[#1,line width=0.7pt] (0cm,-0.1cm) rectangle (0.6cm,0.1cm)
        (0.6cm,0cm) -- (0.7cm,0cm) (0cm,0cm) -- (-0.1cm,0cm)
        (0.7cm,0.1cm) -- (0.7cm,-0.1cm) (-0.1cm,0.1cm) -- (-0.1cm,-0.1cm)
        (0.3cm,-0.1cm) -- (0.3cm,0.1cm);
        \filldraw[#1] (0.9cm,0cm) circle (2pt);
    },},
    boxplot halfcircle legend/.style={
    legend image code/.code={
        \draw[#1,line width=0.7pt] (0cm,-0.1cm) rectangle (0.6cm,0.1cm)
        (0.6cm,0cm) -- (0.7cm,0cm) (0cm,0cm) -- (-0.1cm,0cm)
        (0.7cm,0.1cm) -- (0.7cm,-0.1cm) (-0.1cm,0.1cm) -- (-0.1cm,-0.1cm)
        (0.3cm,-0.1cm) -- (0.3cm,0.1cm);
        \begin{scope}
            \clip (0.9cm,0cm) circle (2pt);
            \fill[#1] (0.9cm+2pt,2pt) rectangle ++(-4pt,-1.9pt);
        \end{scope}
        \draw[#1,line width=0.7pt] (0.9cm,0cm) circle (2pt);
    },},
    }
\begin{tikzpicture}
\begin{axis}[
    width=0.49\textwidth,
    y dir=reverse,
    yticklabels={0\%,5\%,10\%,15\%,20\%,25\%,30\%},
    ymin=0,ymax=7,
    y tick label as interval,
    y=1cm,
    ytick={0,1,2,...,7},
    xlabel={Average MRSA},ylabel={\% of values equal to 0 and 1},
    xmode=log,
    xmin=1.0e-8,xmax=3,
    xmajorgrids=true,
    grid style=dashed,
    boxplot={
        draw position={1/(2+1) + floor(\plotnumofactualtype/2+1e-3) + 1/(2+1)*mod(\plotnumofactualtype,2)},
        box extend=1/(2+2)},
    legend style={
        at={(0.05,0.95)},anchor=north west,},]
\addlegendimage{boxplot halfcircle legend=gray}
\addlegendimage{boxplot circle legend=black}
\foreach \col in {0,...,6}{
\addplot[boxplot,gray,mark=halfcircle*,line width=0.7pt] table[y index = \col]{data/MRSA_NMF.txt};
\addplot[boxplot,black,line width=0.7pt] table[y index = \col]{data/MRSA_BSSMF.txt};
};
\legend{NMF,BSSMF}
\end{axis}
\end{tikzpicture}
    \caption{Boxplots of the average MRSA between the true $W$ and the estimated $W$ by NMF and BSSMF for the hypercube $[0,1]^{100}$ over 20 trials, depending on the percentage, $p_{0,1}$, of values equal to 0 and 1 in the true $W$.}
    \label{fig:mrsa_nmf_vs_bssmf}
\end{figure}  
We vary the percentage $p_{0,1}$ of values touching the lower and uppper bounds in $W$ (namely, 0 and 1) from $0\%$ to $30\%$ with a $5\%$ increment. For each value of $p_{0,1}$, the test is performed 20 times. 
Let us note that among the generated true $W$'s, 
between $p_{0,1}=0\%$ and $p_{0,1}=15\%$, $\binom{W}{J-W}^\top$ never satisfies the necessary conditions for SSC1. 
For $p_{0,1}=20\%$, 3 out of the 20 generated $\binom{W}{J-W}^\top$ satisfies the necessary conditions for SSC1, 
10 out of 20 for $p_{0,1}=25\%$, 
and 17 out of 20 for $p_{0,1}=30\%$. 
Let us also note that for all values of $p_{0,1}$ within the considered range, $W$ never satisfies the necessary conditions for SSC1. 
The distribution of the average MRSAs is reported in~\Cref{fig:mrsa_nmf_vs_bssmf}. Clearly, the MRSA is always smaller for BSSMF compared to NMF, even when the necessary conditions for SSC1 are not satisfied for $\binom{W}{J-W}^\top$; this is because the feasible set of BSMF is contained in that of NMF, and hence the generated factors are more likely to be closer to the groundtruth. This also illustrates that the conditions of \Cref{th:uniqueBSSMF} for the identifiability of BSSMF are only sufficient, since BSSMF finds solutions with MRSA close to machine epsilon when these conditions are not fulfilled.} 

\subsection{Robustness to overfitting} \label{sec_robust}

In this section we compare 
unconstrained matrix factorization (MF), 
NMF and BSSMF on the matrix completion problem; more precisely, on rating datasets for recommendation systems. Let $X$ be an item-by-user matrix and suppose that user $j$ has rated item $i$, that rating would be stored in $X_{i,j}$. The matrix $X$ is then highly incomplete since a user has typically only rated a few of the items. 
In this context, NMF looks for nonnegative factors $W$ and $H$ such that $M\circ X\approx M\circ (WH)$, where $M_{i,j}$ is equal to $1$ when user $j$ rated item $i$ and is equal to $0$ otherwise. A missing rating $X_{i,j}$ is then estimated by computing $W(i,:)H(:,j)$. Features learned by NMF on rating datasets tend to be parts of typical users. Yet, the nonnegative constraint on the factors hardly makes the features interpretable by a practitioner. Suppose that the rating a user can give is an integer between $1$ and $5$ like in many rating systems, NMF can learn features whose values may fall under the minimum rating $1$ or may exceed the maximum rating $5$. Consequently, the features cannot directly be interpreted as typical users. On the contrary, with BSSMF, the extracted features will directly be interpretable if the lower and upper bounds are set to the minimum and maximum ratings. On top of that, BSSMF is expected to be less sensitive to overfitting than NMF since its feasible set is more constrained.

This last point will be highlighted in the following experiment on the ml-1m dataset\footnote{\href{https://grouplens.org/datasets/movielens/1m/}{https://grouplens.org/datasets/movielens/1m/}}, which contains 1 million ratings from 6040 users on 3952 movies. As in~\cite{liang2018variational}, we split the data in two sets~: a training set and a test set. The test set contains 500 users. We also remove any movie that has been rated less than 5 times \revise{from both the training and test sets}. For the test set, 80\% of a user's ratings are considered as known. The remaining 20\% are kept for evaluation. During the training, we learn $W$ only on the training set. During the testing, the learned $W$ is used to predict those 20\% kept ratings of the test set by solving the $H$ part only on the 80\% known ratings. This simulates new users that were not taken into account during the training, but for whom we would still want to predict 
the ratings. 
The reported \revise{RMSEs} are computed on the 20\% kept ratings of the test set. In order to challenge the overfitting issue, we vary $r$ in $\{1,5,10,20,50,100\}$ for BSSMF, NMF and an unconstrained MF \ngi{which are all computed using \cref{alg:BSSMF}, where the projections onto the feasible sets are adapted accordingly (projection onto the nonnegative orthant for NMF, no projection for unconstrained MF)}. 
\ovt{The stopping criteria in line~\ref{alg:BSSMF:line:outerloop}, \ref{alg:BSSMF:line:Winnerloop} and \ref{alg:BSSMF:line:Hinnerloop} of \cref{alg:BSSMF} are a maximum number of iterations equal to 200, 1 and 1, respectively, for all algorithms.} 
The experiment is conducted on 10 random initializations and the average RMSEs are reported is \Cref{tab:rmse_ml-1m}. As expected, BSSMF and NMF are more robust to overfitting than unconstrained MF. Additionnaly, BSSMF is also clearly more robust to overfitting than NMF. Its worse RMSE is $0.89$ with $r=100$ (and it is still equal to $0.89$ with $r=200$), while, for NMF, the RMSE is $0.92$ when $r=100$ (which is worse than a rank-one factorization giving a RMSE of $0.91$). 
\begin{table}[htb!]
    \centering
    \begin{tabular}{r|lll}
        r & BSSMF & NMF & MF \\ \hline
        1 & $0.97 \pm 2\cdotp10^{-5}$ & $0.88 \pm 0.002$ & $0.91 \pm 5\cdotp10^{-6}$ \\
        5 & $0.87 \pm 0.001$ & $0.87 \pm 0.003$ & $0.87 \pm 0.003$ \\
        10 & $0.86 \pm 0.002$ & $0.87 \pm 0.001$ & $0.87 \pm 0.002$ \\
        20 & $0.87 \pm 0.002$ & $0.87 \pm 0.002$ & $0.88 \pm 0.002$ \\
        50 & $0.88 \pm 0.002$ & $0.90 \pm 0.004$ & $0.93 \pm 0.004$ \\
        100 & $0.89 \pm 0.003$ & $0.92 \pm 0.003$ & $0.99 \pm 0.004$
    \end{tabular}
    \caption{RMSE on the test set according to $r$, averaged \textpm{ standard deviation} on 10 runs on ml-1m}
    \label{tab:rmse_ml-1m}
\end{table}


 The same experiment is conducted on the ml-100k dataset\footnote{\href{https://grouplens.org/datasets/movielens/100k/}{https://grouplens.org/datasets/movielens/100k/}} which contains 100,000 ratings from 1,700 movies rated by 1,000 users. The test set contains 50 users. The results are reported in \Cref{tab:rmse_ml-100k}, and the observations are similar: BSSMF is significantly more robust to overfitting than NMF and unconstrained MF. 
\begin{table}[htb!]
    \centering
    \begin{tabular}{r|lll}
        r & BSSMF & NMF & MF \\ \hline
        1 & $0.98 \pm 1\cdotp10^{-4}$ & $0.91 \pm 3\cdotp10^{-5}$ & $0.91 \pm 5\cdotp10^{-5}$\\
        5 & $0.89 \pm 0.005$ & $0.89 \pm 0.01$ & $0.89 \pm 0.008$ \\
        10 & $0.90\pm 0.008$ & $0.90 \pm 0.009$ & $0.92 \pm 0.01$ \\
        20 & $0.91 \pm 0.01$ & $0.93 \pm 0.01$ & $0.97 \pm 0.02$ \\
        50 & $0.93 \pm 0.01$ & $0.97 \pm 0.01$ & $1.06 \pm 0.03$ \\
        100 & $0.94 \pm 0.01$ & $1.01 \pm 0.007$ & $1.13 \pm 0.02$
    \end{tabular}
    \caption{RMSE on the test set according to $r$, averaged \textpm{ standard deviation} on 10 runs on ml-100k}
    \label{tab:rmse_ml-100k}
\end{table} 
\vspace{-0.6cm}

\section{Conclusion} 

In this paper, we proposed a new factorization model, namely bounded simplex structured matrix factorization (BSSMF). 
Fitting this model retrieves interpretable factors: 
the learned basis features can be interpreted in the same way as the original data while the activations are nonnegative and sum to one, leading to a straightforward soft clustering interpretation. 
Instead of learning parts of objects as NMF, BSSMF learns objects that  can be used to explain the data through convex combinations. 
We have proposed a dedicated fast algorithm for BSSMF, and showed that, 
under mild conditions, BSSMF is essentially unique. 
 We also showed that the constraints in BSSMF make it robust to overfitting on rating datasets without adding any regularization term. 

 \revise{Further work include:  \begin{itemize}
     \item the use of BSSMF for other applications, 

     \item the generalization of BSSMF, e.g., as done in~\cite{tatli2021generalized} for SSMF where the feasible set for the columns of $H$, namely the probability simplex, is replaced by any polytope,  

     \item  the design of more efficient algorithms for BSSMF,  and 

     \item  the design of algorithms for other BSSMF models, e.g., with other data fitting terms such as the Kullback-Leibler divergence, as done recently in~\cite{leplat2021multiplicative} for SSMF with nonnegativity constraint on $W$. 
 \end{itemize} 
}

\revise{
\subsection*{Acknowledgment}   
We sincerely thank the Associate Editor and the reviewers for taking the time to carefully read the paper, and for the very detailed and insightful feedback that helped us improve our paper.
}

\bibliographystyle{plain}
\bibliography{BoundedSSMF}

\begin{thebibliography}{10}

\bibitem{abdolali2021simplex}
Maryam Abdolali and Nicolas Gillis.
\newblock Simplex-structured matrix factorization: Sparsity-based
  identifiability and provably correct algorithms.
\newblock {\em SIAM Journal on Mathematics of Data Science}, 3(2):593--623,
  2021.

\bibitem{ang2019algorithms}
Andersen Man~Shun Ang and Nicolas Gillis.
\newblock Algorithms and comparisons of nonnegative matrix factorizations with
  volume regularization for hyperspectral unmixing.
\newblock {\em IEEE J. Sel. Top. Appl. Earth Obs. Remote Sens.},
  12(12):4843--4853, 2019.

\bibitem{bakshilearning}
Ainesh Bakshi, Chiranjib Bhattacharyya, Ravi Kannan, David~P Woodruff, and
  Samson Zhou.
\newblock Learning a latent simplex in input sparsity time.
\newblock In {\em Proceedings of the International Conference on Learning
  Representations (ICLR)}, 2021.

\bibitem{berman1994nonnegative}
Abraham Berman and Robert~J Plemmons.
\newblock {\em Nonnegative Matrices in the Mathematical Sciences}, volume~9.
\newblock SIAM, 1994.

\bibitem{bolte2014proximal}
J{\'e}r{\^o}me Bolte, Shoham Sabach, and Marc Teboulle.
\newblock Proximal alternating linearized minimization for nonconvex and
  nonsmooth problems.
\newblock {\em Mathematical Programming}, 146(1):459--494, 2014.

\bibitem{candes2011robust}
Emmanuel~J Cand{\`e}s, Xiaodong Li, Yi~Ma, and John Wright.
\newblock Robust principal component analysis?
\newblock {\em Journal of the ACM (JACM)}, 58(3):11, 2011.

\bibitem{chandrasekaran2011rank}
Venkat Chandrasekaran, Sujay Sanghavi, Pablo~A Parrilo, and Alan~S Willsky.
\newblock Rank-sparsity incoherence for matrix decomposition.
\newblock {\em SIAM Journal on Optimization}, 21(2):572--596, 2011.

\bibitem{Cichocki07HALS}
A.~Cichocki, R.~Zdunek, and S.-I. Amari.
\newblock Hierarchical {ALS} algorithms for nonnegative matrix and {3D} tensor
  factorization.
\newblock In {\em Lecture Notes in Computer Science, Vol. 4666, Springer},
  pages 169--176, 2007.

\bibitem{cichocki2009nonnegative}
Andrzej Cichocki, Rafal Zdunek, Anh~Huy Phan, and Shun-ichi Amari.
\newblock {\em Nonnegative Matrix and Tensor Factorizations: Applications to
  Exploratory Multi-Way Data Analysis and Blind Source Separation}.
\newblock John Wiley \& Sons, 2009.

\bibitem{cruces2010bounded}
Sergio Cruces.
\newblock Bounded component analysis of linear mixtures: A criterion of minimum
  convex perimeter.
\newblock {\em IEEE Transactions on Signal Processing}, 58(4):2141--2154, 2010.

\bibitem{dAspremont2007spca}
Alexandre. d'Aspremont, Laurent. El~Ghaoui, Michael~I. Jordan, and Gert R.~G.
  Lanckriet.
\newblock A direct formulation for sparse {PCA} using semidefinite programming.
\newblock {\em SIAM Review}, 49(3):434--448, 2007.

\bibitem{deville2019separability}
Yannick Deville.
\newblock From separability/identifiability properties of bilinear and
  linear-quadratic mixture matrix factorization to factorization algorithms.
\newblock {\em Digital Signal Processing}, 87:21--33, 2019.

\bibitem{donoho2004does}
David Donoho and Victoria Stodden.
\newblock When does non-negative matrix factorization give a correct
  decomposition into parts?
\newblock In {\em Advances in Neural Information Processing Systems (NIPS)},
  pages 1141--1148, 2004.

\bibitem{erdogan2013class}
Alper~T Erdogan.
\newblock A class of bounded component analysis algorithms for the separation
  of both independent and dependent sources.
\newblock {\em IEEE Transactions on Signal Processing}, 61(22):5730--5743,
  2013.

\bibitem{fu2018identifiability}
Xiao Fu, Kejun Huang, and Nicholas~D Sidiropoulos.
\newblock On identifiability of nonnegative matrix factorization.
\newblock {\em IEEE Signal Processing Letters}, 25(3):328--332, 2018.

\bibitem{xiao2019uniq}
Xiao Fu, Kejun Huang, Nicholas~D. Sidiropoulos, and Wing-Kin Ma.
\newblock Nonnegative matrix factorization for signal and data analytics:
  Identifiability, algorithms, and applications.
\newblock {\em IEEE Signal Process. Mag.}, 36(2):59--80, 2019.

\bibitem{fu2015blind}
Xiao Fu, Wing-Kin Ma, Kejun Huang, and Nicholas~D Sidiropoulos.
\newblock Blind separation of quasi-stationary sources: Exploiting convex
  geometry in covariance domain.
\newblock {\em IEEE Transactions on Signal Processing}, 63(9):2306--2320, 2015.

\bibitem{gabriel1979lower}
K~Ruben Gabriel and Shmuel Zamir.
\newblock Lower rank approximation of matrices by least squares with any choice
  of weights.
\newblock {\em Technometrics}, 21(4):489--498, 1979.

\bibitem{GG12}
N.~Gillis and F.~Glineur.
\newblock Accelerated multiplicative updates and hierarchical \textsc{ALS}
  algorithms for nonnegative matrix factorization.
\newblock {\em Neural Computation}, 24(4):1085--1105, 2012.

\bibitem{gillis2020}
Nicolas Gillis.
\newblock {\em Nonnegative Matrix Factorization}.
\newblock SIAM, 2020.

\bibitem{gillis2011low}
Nicolas Gillis and Fran{\c{c}}ois Glineur.
\newblock Low-rank matrix approximation with weights or missing data is
  {NP}-hard.
\newblock {\em SIAM Journal on Matrix Analysis and Applications},
  32(4):1149--1165, 2011.

\bibitem{gillis2015exact}
Nicolas Gillis and Abhishek Kumar.
\newblock Exact and heuristic algorithms for semi-nonnegative matrix
  factorization.
\newblock {\em SIAM Journal on Matrix Analysis and Applications},
  36(4):1404--1424, 2015.

\bibitem{guan2012nenmf}
Naiyang Guan, Dacheng Tao, Zhigang Luo, and Bo~Yuan.
\newblock {NeNMF}: An optimal gradient method for nonnegative matrix
  factorization.
\newblock {\em IEEE Transactions on Signal Processing}, 60(6):2882--2898, 2012.

\bibitem{hien2019extrapolNMF}
Le~Thi~Khanh Hien, Nicolas Gillis, and Panagiotis Patrinos.
\newblock Inertial block proximal methods for non-convex non-smooth
  optimization.
\newblock In {\em Proceedings of the 37th International Conference on Machine
  learning (ICML)}, 2020.

\bibitem{hien2023inertial}
Le~Thi~Khanh Hien, Duy~Nhat Phan, and Nicolas Gillis.
\newblock An inertial block majorization minimization framework for nonsmooth
  nonconvex optimization.
\newblock {\em Journal of Machine Learning Research}, 24:1--41, 2023.

\bibitem{huang2013non}
Kejun Huang, Nicholas~D Sidiropoulos, and Ananthram Swami.
\newblock Non-negative matrix factorization revisited: Uniqueness and algorithm
  for symmetric decomposition.
\newblock {\em IEEE Transactions on Signal Processing}, 62(1):211--224, 2013.

\bibitem{kannan2014bounded}
Ramakrishnan Kannan, Mariya Ishteva, and Haesun Park.
\newblock Bounded matrix factorization for recommender system.
\newblock {\em Knowledge and Information Systems}, 39(3):491--511, 2014.

\bibitem{koren2009matrix}
Yehuda Koren, Robert Bell, and Chris Volinsky.
\newblock Matrix factorization techniques for recommender systems.
\newblock {\em Computer}, 42(8):30--37, 2009.

\bibitem{kueng2021binary}
Richard Kueng and Joel~A Tropp.
\newblock Binary component decomposition part {I}: the positive-semidefinite
  case.
\newblock {\em SIAM Journal on Mathematics of Data Science}, 3(2):544--572,
  2021.

\bibitem{laurberg2008theorems}
Hans Laurberg, Mads~Gr{\ae}sb{\o}ll Christensen, Mark~D Plumbley, Lars~Kai
  Hansen, and S{\o}ren~Holdt Jensen.
\newblock Theorems on positive data: On the uniqueness of {NMF}.
\newblock {\em Computational Intelligence and Neuroscience}, 2008, 2008.

\bibitem{lecun1998gradient}
Yann LeCun, L{\'e}on Bottou, Yoshua Bengio, and Patrick Haffner.
\newblock Gradient-based learning applied to document recognition.
\newblock {\em Proceedings of the IEEE}, 86(11):2278--2324, 1998.

\bibitem{lee1999learning}
Daniel~D. Lee and H.~Sebastian Seung.
\newblock Learning the parts of objects by non-negative matrix factorization.
\newblock {\em Nature}, 401:788--791, 1999.

\bibitem{lee2001algorithms}
Daniel~D Lee and H~Sebastian Seung.
\newblock Algorithms for non-negative matrix factorization.
\newblock In {\em Advances in Neural Information Processing Systems (NIPS)},
  pages 556--562, 2001.

\bibitem{leplat2021multiplicative}
Valentin Leplat, Nicolas Gillis, and J{\'e}r{\^o}me Idier.
\newblock Multiplicative updates for {NMF} with $\beta$-divergences under
  disjoint equality constraints.
\newblock {\em SIAM Journal on Matrix Analysis and Applications},
  42(2):730--752, 2021.

\bibitem{liang2018variational}
Dawen Liang, Rahul~G Krishnan, Matthew~D Hoffman, and Tony Jebara.
\newblock Variational autoencoders for collaborative filtering.
\newblock In {\em Proceedings of the 2018 world wide web conference}, pages
  689--698, 2018.

\bibitem{lin2015identifiability}
Chia-Hsiang Lin, Wing-Kin Ma, Wei-Chiang Li, Chong-Yung Chi, and ArulMurugan
  Ambikapathi.
\newblock Identifiability of the simplex volume minimization criterion for
  blind hyperspectral unmixing: The no-pure-pixel case.
\newblock {\em IEEE Trans. Geosci. Remote Sens.}, 53(10):5530--5546, 2015.

\bibitem{liu2021factor}
Kai Liu, Xiangyu Li, Zhihui Zhu, Lodewijk Brand, and Hua Wang.
\newblock Factor-bounded nonnegative matrix factorization.
\newblock {\em ACM Trans. Knowl. Discov. Data}, 15(6), may 2021.

\bibitem{man2021accelerating}
Andersen Man Shun~Ang, Jeremy~E Cohen, Nicolas Gillis, and Le~Thi Khanh~Hien.
\newblock Accelerating block coordinate descent for nonnegative tensor
  factorization.
\newblock {\em Numerical Linear Algebra with Applications}, 28(5):e2373, 2021.

\bibitem{mansour2002blind}
Ali Mansour, Noboru Ohnishi, and Carlos~Garc{\'\i}a Puntonet.
\newblock Blind multiuser separation of instantaneous mixture algorithm based
  on geometrical concepts.
\newblock {\em Signal Processing}, 82(8):1155--1175, 2002.

\bibitem{moussaoui2005non}
Sa{\"\i}d Moussaoui, David Brie, and J{\'e}r{\^o}me Idier.
\newblock Non-negative source separation: range of admissible solutions and
  conditions for the uniqueness of the solution.
\newblock In {\em IEEE Int. Conf. Acoust. Speech Signal Process. (ICASSP)},
  volume~5, pages v--289. IEEE, 2005.

\bibitem{rendle2021revisiting}
Steffen Rendle, Walid Krichene, Li~Zhang, and Yehuda Koren.
\newblock Revisiting the performance of ials on item recommendation benchmarks.
\newblock {\em arXiv preprint arXiv:2110.14037}, 2021.

\bibitem{SJ03}
N.~Srebro and T.~Jaakkola.
\newblock Weighted low-rank approximations.
\newblock In {\em Proceedings of the 20th International Conference on Machine
  learning (ICML)}, volume~3, pages 720--727, 2003.

\bibitem{tatli2021generalized}
Gokcan Tatli and Alper~T Erdogan.
\newblock Generalized polytopic matrix factorization.
\newblock In {\em IEEE Int. Conf. Acoust. Speech Signal Process. (ICASSP)},
  pages 3235--3239. IEEE, 2021.

\bibitem{udell2019big}
Madeleine Udell and Alex Townsend.
\newblock Why are big data matrices approximately low rank?
\newblock {\em SIAM Journal on Mathematics of Data Science}, 1(1):144--160,
  2019.

\bibitem{Vaswani2018PCA}
Namrata Vaswani, Yuejie Chi, and Thierry Bouwmans.
\newblock Rethinking {PCA} for modern data sets: Theory, algorithms, and
  applications [scanning the issue].
\newblock {\em Proceedings of the IEEE}, 106(8):1274--1276, 2018.

\bibitem{vuthanh21}
O.~{Vu Thanh}, A.M.S. Ang, N.~Gillis, and L.~T.~K. Hien.
\newblock Inertial majorization-minimization algorithm for minimum-volume
  {NMF}.
\newblock In {\em EUSIPCO 2021}. Dublin, 2021.

\bibitem{vuthanh2022bounded}
Olivier Vu~Thanh, Nicolas Gillis, and Fabian Lecron.
\newblock Bounded simplex-structured matrix factorization.
\newblock In {\em IEEE Int. Conf. Acoust. Speech Signal Process. (ICASSP)},
  pages 9062--9066. IEEE, 2022.

\bibitem{wu2017stochastic}
Ruiyuan Wu, Wing-Kin Ma, and Xiao Fu.
\newblock A stochastic maximum-likelihood framework for simplex structured
  matrix factorization.
\newblock In {\em IEEE Int. Conf. Acoust. Speech Signal Process. (ICASSP)},
  pages 2557--2561, 2017.

\bibitem{Zhuang2019paramminvol}
Lina Zhuang, Chia-Hsiang Lin, Mário A.~T. Figueiredo, and José~M.
  Bioucas-Dias.
\newblock Regularization parameter selection in minimum volume hyperspectral
  unmixing.
\newblock {\em IEEE Trans. Geosci. Remote Sens.}, 57(12):9858--9877, 2019.

\end{thebibliography}

\end{document}